\newtheorem{theorem}{Theorem}
\newtheorem{lemma}{Lemma}
\newtheorem{proposition}{Proposition}
\newtheorem{corollary}{Corollary}
\newtheorem{remark}{Remark}
\newcommand{\hL}{\hat{L}}
\newcommand{\CX}{\mathcal{X}}
\newcommand{\goes}{\rightarrow}
\newcommand{\baS}{\bar{S}}
\providecommand{\indic}[1]{\mathbf{1}\{#1\}}
\DeclareMathOperator{\E}{E}
\DeclareMathOperator{\ERM}{ERM}
\DeclareMathOperator{\uniform}{uniform}
\definecolor{darkblue}{rgb}{0.0, 0.0, 0.55}
\title{Learning from Synthetic Data: Limitations of ERM\thanks{Authors ordered alphabetically. Correspondence to \texttt{\{kamin,alexbie,weiweikong,usyed,sergeiv\}@google.com}.}}
\author{
Kareem Amin
\and 
Alex Bie
\and 
Weiwei Kong
\and 
Umar Syed
\and 
Sergei Vassilvitskii
\and \\
\Large{Google Research}
}
\date{}
\begin{document}

\maketitle

\begin{abstract}
The prevalence and low cost of LLMs have led to a rise of synthetic content. From review sites to court documents, ``natural'' content has been contaminated by data points that appear similar to natural data, but are in fact LLM-generated. In this work we revisit fundamental learning theory questions in this, now ubiquitous, setting. We model this scenario as a sequence of learning tasks where the input is a mix of natural and synthetic data, and the learning algorithms are oblivious to the origin of any individual example. 

We study the possibilities and limitations of ERM in this setting. For the problem of estimating the mean of an arbitrary  $d$-dimensional distribution, we find that while ERM converges to the true mean, it is outperformed by an algorithm that assigns non-uniform weights to examples from different generations of data. For the PAC learning setting, the disparity is even more stark. We find that ERM does not always converge to the true concept, echoing the model collapse literature. However, we show there are algorithms capable of learning the correct hypothesis for arbitrary VC classes and arbitrary amounts of contamination.





\end{abstract}
\section{Introduction}

Large language models (LLMs) represent cutting-edge advances of AI, and developers of LLMs routinely consume publicly-available datasets such as Common Crawl\footnote{\url{http://commoncrawl.org}} to improve the capabilities of their models. The first generation of LLMs were largely trained on human-generated data. However, the success of LLMs and their increased adoption has had an unexpected consequence of AI-generated content appearing in places where there was previously none. 
Thus machine learning practitioners should be aware that there is an increased chance that their training data is contaminated by LLM-generated content. 

Previous work has looked into the value of synthetic (\emph{i.e.}, AI-generated) data, and showed that while naively adding this data to the training mix may lead to model collapse, being more diligent about which data is added, the amount of curation it undergoes, and the specifics of the training process may mitigate that risk, or reverse it, leading to improved performance. These works almost uniquely focus on the LLM setting, trying to improve state of the art performance on a set of benchmarks. 

In contrast, in this work we take a traditional learning theory view on this problem. We begin by formalizing the setting and developing a framework that captures the invariants of having natural training data contaminated by synthetic additions. Specifically, we see three salient points:

\begin{itemize}\itemsep=0in
    \item {\bf Groundtruth.} There exists a (potentially small) set of natural data, coming from the true data generation distribution. 
    \item {\bf Contamination.} Natural data is repeatedly supplemented with synthetic data, which attempts to mimic or imitate the groundtruth. Moreover, 
    one cannot be certain about the origin of any individual example, making synthetic data difficult to filter out. 
    \item {\bf Repetition.} The process is continuous, the training set grows over time, adding data contaminated with the latest model in each iteration. 
\end{itemize}


Faced with this contaminated data, our goal is to find a learning algorithm which outputs models that continuously improve generalization error in every iteration. 

\subsection{Our Contributions}

\paragraph{Parameterized Contamination.} 
Learning in the presence of synthetic data has been studied theoretically by multiple authors (see Section~\ref{sec:related}). Many of these works consider an extreme, \emph{purely recursive}, setting. Aside from an initial seed set of uncontaminated data, the learner is exposed to  synthetic data in each subsequent iteration of learning that has been generated by the previous iteration's model. We extend and generalize these works by introducing a parameter $\alpha$ characterizing the degree of model contamination in each iteration, where $\alpha = 1$ corresponds to the pure recursive setting. The challenge in our setting will be that natural data and synthetic data are indistinguishable. 

\paragraph{Mean estimation.} Within this framework, we begin by studying the most fundamental statistical problem of estimating the mean of an unknown distribution. The sample average is well-known to be the empirical risk minimizer for $l_2$-error. In the synthetic data and model collapse literature this approach, gathering a dataset across multiple generations of models and weighing its examples uniformly, is well-studied and is sometimes known as an \emph{augmentation workflow}~\citep{dey2024universality} or \emph{uniform weighting}. 

Our first result (Theorem~\ref{thm:main_var_id}) is an \emph{exact} characterization of the variance of uniform weighting. The finding generalizes existing results along two key dimensions: the exact characterization holds for all contamination parameters $\alpha$, and makes almost no distributional assumptions. Our second key result (Theorem~\ref{thm:not_mvue}) is that, for all distributions in our setting, there exists a non-trivial $(\alpha < 1)$ contamination parameter, where uniform weighting is \emph{not} the minimum-variance unbiased estimate (MVUE) of the mean. 

\paragraph{PAC learning.} The mean estimation problem provides a glimpse into the kinds of problems synthetic data can introduce even in simple settings. To better understand the implications of synthetic data, we study this question in the PAC learning setting \citep{valiant1984theory}. We give a lower bound that demonstrates that the natural algorithm, repeatedly computing the ERM hypothesis on the previous round's data, does not continually improve the generalization error of the classifier (Theorem \ref{thm:lower}). It is notable that the construction holds for a very simple problem: learning 1-$d$ thresholds in the realizable setting, but the lower bound extends to VC classes of arbitrary dimensions. 

Complementing the lower bound we construct two universal algorithms, for problems with arbitrary VC dimension, that, ignoring computational constraints, produce classifiers with vanishing generalization error (Theorems \ref{thm:vc}, \ref{thm:vc_known_alpha}) regardless of the amount of data contamination.  

\subsection{Related work}\label{sec:related}
 Beginning with the paper by ~\citet{shumailov2024ai} that introduced the notion of ``model collapse,'' researchers have shown how to avoid it by being careful about how and where to use synthetic data, and have provided some theoretical justifications for its success. Much of this work concerns studying the effect of synthetic data in the context of generative models and LLMs. See, for instance, the work by ~\cite{alemohammad2024selfconsuming,Hataya_2023_ICCV,gerstgrasser2024is} on model collapse, and ~\cite{amin2025, bertrand2024on,seddik2024how, feng2024beyond,ferbach2024selfconsuminggenerativemodelscurated, firdoussi2025maximizing} on ways to combine synthetic and natural data. Our work departs from these by studying the effect of contaminated synthetic data on two fundamental learning settings: \emph{mean estimation}, and \emph{PAC learning}. 
 
 \paragraph{Mean estimation.} 
  Inspired by the model collapse literature, several authors~\citep{suresh2024rate, dey2024universality, barzilai2025models,kanabar}, have considered theoretical learning problems relevant to mean estimation in settings with model-contaminated data. \citet{dey2024universality} provide a useful taxonomy, distinguishing between the \emph{augmentation workflow} where examples are combined uniformly across learning iterations, and the \emph{discard workflow} where only the last iteration's examples are used for learning. To the best of our knowledge, our work is the first to exactly characterize the variance of mean estimation for a $d$-dimensional distribution in the augmentation workflow, with minimal distributional assumptions and arbitrary model contamination.
  
  Most of the relveant work~\citep{suresh2024rate, dey2024universality, barzilai2025models} focuses on the purely recursive setting where the data produced on each iteration comes from the previous generation's model ($\alpha = 1$, in the language of this work). The work of ~\citet{kanabar} stands apart in considering general forms of data contamination ($\alpha \not= 1$). However, they consider only discrete distributions. 
  
  Within the purely-recursive literature, we can compare our results. \citet{suresh2024rate} study the discard workflow, but for specific distributions: discrete distributions, 1-$d$ Gaussians, and 1-$d$ mixtures of Gaussians. \citet{kazdan2024collapse} characterize the variance of 1-$d$ Gaussian mean estimation in the augmentation workflow. \citet{barzilai2025models} study the augmentation workflow in a setting that subsumes Gaussian mean estimation. However, their main result requires that sample sizes grow with the number of iterations of learning. In the special case of $\alpha = 1$, we recover the $\pi^2/6$ variance noted by \citet{dey2024universality}, who analyze Gaussian mean estimation. While~\citet{dey2024universality} note the benefits of the augmentation workflow over the discard workflow, our work goes a step further and establishes the sub-optimality of even the augmentation workflow.

  \paragraph{PAC learning.} To the best of our knowledge, we are the first to consider PAC learning in the context of learning from synthetic data. \citet{valiant1985learning} and \citet{kearns1988learning} modified the PAC learning setting to include an adversary who arbitrarily perturbs a fraction of the training data, while \citet{angluin1988learning} introduced a random probability that the true label for each training example is flipped. Unlike those noise models --- which can be described, respectively, as malicious and oblivious --- in our setting the source of the noise is the learner itself, via the models it outputs in previous iterations. Finally, our main results in the PAC setting make use of results of \citet{angluin1988learning, liu2002partially} and \cite{mansouri2025learning} via reductions.

\section{Mean Estimation}

\label{sec:gaussian_mean_est}
We begin with one of the simplest learning question: given a set of samples from a distribution, estimate the mean of the distribution. In the traditional setting, it is well known that among all unbiased estimates, the empirical mean of the samples minimizes the squared loss -- the $\ell_2^2$ norm between the estimate and the truth. For many distributions (e.g. Gaussian) it is also the minimum variance unbiased estimator (MVUE) of the mean. Our first results exactly characterize the variance of the empirical mean when data is contaminated with synthetic data, and demonstrate that there is no distribution for which it is the MVUE in general.  

\subsection{Setup}
Suppose that the natural data is drawn from a $d$-dimensional distribution with mean $\mu$. At each time $t = 1,2,\ldots$, we will produce an estimate $Y_t$ for $\mu$. We are interested in studying a setting where this estimate contaminates the data produced in the subsequent round. 

A simple version of this is as follows. Suppose that on round $t$ we are given access to a pool of examples, where a $(1-\alpha)$ fraction comes from a distribution $D_0$ with mean $\mu$ and an $\alpha$ fraction comes from a distribution $D_1$ with mean $Y_{t-1}$. Let $X_t$ denote the average of these examples, which has the same distribution as $\alpha Y_{t-1} + (1-\alpha) \mu + U$, where $U \sim D(\Sigma)$ and $D(\Sigma)$ has mean $0$ and covariance $\Sigma$ (that depends on $\alpha, D_0, D_1$ and the size of the pool). 

Traditionally, one would estimate $\mu$ by taking $Y_t$ to be the average of all examples observed up to time $t$. Letting $U_t \sim D(\Sigma)$, this results in the following stochastic process:
\begin{equation}
\begin{gathered}X_1 = \mu + U_t,\quad X_{t}=\alpha Y_{t-1}+(1-\alpha)\mu+U_{t} \qquad (t\geq2),\\
Y_{t}=\frac{1}{t}\sum_{s=1}^{t}X_{s} \qquad (t\geq1).
\end{gathered}
\label{eq:stoch_proc}
\end{equation}
More generally, a learner may combine observations $\{X_t\}$ non-uniformly across rounds to form estimates $\{Y_t\}$. That is, for
\[
w=(w^{1},w^{2},\dots)\text{ for }w^{s}\in\mathbb{R}^{s},
\]
and $X_t$ is as in \eqref{eq:stoch_proc}, the learner could form the sequence
\begin{equation}
\begin{gathered}Y_{t}(w)=\sum_{s=1}^{t} w^t_s X_{s}(w) \qquad (t\geq1).
\end{gathered}
\label{eq:gen_stoch_proc}
\end{equation}
It can be shown, inductively, that for any weighting strategy in the probability simplex, the set $\{Y_t(w)\}$ consists of unbiased estimators. 

\begin{remark}
If for every $t$, the vector $w^t$ is in the probability simplex, then $\mathbb{E}[X_{t}(w)]=\mathbb{E}[Y_{t}(w^{t})]=\mu$ for any
$\alpha>0$ and $t\geq1$.
\end{remark}
Moreover, we are interested in minimizing ${\rm Var}(Y_t(w))$, since:
\begin{align*}
\mathbb{E}\left[\|Y_{t}(w)-\mu\|_2^{2}\right] &=\mathbb{E}\left[{\rm tr}\left(\left[Y_{t}(w)-\mu\right]\left[Y_{t}(w)-\mu\right]^{*}\right)\right] \\
 &={\rm tr}\mathbb{E}\left[\left[Y_{t}(w)-\mu\right]\left[Y_{t}(w)-\mu\right]^{*}\right] 
 ={\rm tr}\ {\rm Var}\left[Y_{t}(w)\right].
 \end{align*}

\subsection{Uniform Weights}

Our first result characterizes the variance of $Y_t$ in equation~(\ref{eq:stoch_proc}) with equality. The proof is straightforward; we express the variance of $Y_t$ in terms of the variance of $Y_{t-1}$, establishing the recursion ${\rm Var}(Y_t) = \left(\frac{t-1+\alpha}{t}\right)^{2} {\rm Var}(Y_{t-1}) + \frac{1}{t^{2}}\Sigma$. We then solve this recursion in terms of the gamma function. The full proof is provided in Appendix~\ref{sec:gaussian_appendix}.
\
\begin{theorem}
\label{thm:main_var_id}
It holds that
\begin{equation*}
{\rm Var}(Y_{t})=\left\{ \frac{1}{t^{2}}+\left[\frac{\Gamma(t+\alpha)}{\Gamma(t+1)}\right]^{2}\sum_{k=1}^{t-1}\left[\frac{\Gamma(k+1)}{k\cdot\Gamma(k+\alpha)}\right]^{2}\right\} \Sigma.
\end{equation*}
\end{theorem}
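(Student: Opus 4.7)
The plan is to follow the roadmap given in the statement: derive a one-step variance recursion for $Y_t$, then unroll it and recognise the products that appear as ratios of $\Gamma$-values.

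First I would rewrite $Y_t$ in a form that exposes its dependence on $Y_{t-1}$. Using $Y_t=\tfrac{1}{t}\sum_{s=1}^t X_s = \tfrac{(t-1)Y_{t-1}+X_t}{t}$ together with $X_t=\alpha Y_{t-1}+(1-\alpha)\mu+U_t$ from~\eqref{eq:stoch_proc}, I get
\[
Y_t=\frac{t-1+\alpha}{t}\,Y_{t-1}+\frac{1-\alpha}{t}\,\mu+\frac{1}{t}U_t.
\]
Because $U_t$ is independent of $U_1,\dots,U_{t-1}$ and hence of $Y_{t-1}$, taking variances of both sides and using $\mathrm{Var}(U_t)=\Sigma$ yields the claimed recursion
\[
\mathrm{Var}(Y_t)=\left(\frac{t-1+\alpha}{t}\right)^{2}\mathrm{Var}(Y_{t-1})+\frac{1}{t^{2}}\Sigma, \qquad \mathrm{Var}(Y_1)=\Sigma.
\]

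Next, since the recursion is a scalar linear combination in the matrix $\mathrm{Var}(Y_{t-1})$ with a $\Sigma$ forcing term, I would write $\mathrm{Var}(Y_t)=c_t\Sigma$ and solve the scalar recursion $c_t = r_t c_{t-1} + 1/t^2$ with $r_t=(t-1+\alpha)^2/t^2$ and $c_1=1$. Unrolling gives
\[
c_t=\Bigl(\prod_{s=2}^{t} r_s\Bigr) c_1 + \sum_{k=2}^{t}\Bigl(\prod_{s=k+1}^{t} r_s\Bigr)\frac{1}{k^2},
\]
and the homogeneous term $\prod_{s=2}^{t} r_s$ is exactly what the general sum term gives at $k=1$ (where $1/k^2 = 1$), so combining them yields
\[
c_t=\frac{1}{t^{2}}+\sum_{k=1}^{t-1}\Bigl(\prod_{s=k+1}^{t} r_s\Bigr)\frac{1}{k^2}.
\]

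Finally, I would evaluate the product using Gamma-function identities: $\prod_{s=k+1}^{t}(s-1+\alpha)=\prod_{j=k}^{t-1}(j+\alpha)=\Gamma(t+\alpha)/\Gamma(k+\alpha)$ and $\prod_{s=k+1}^{t} s = \Gamma(t+1)/\Gamma(k+1)$. Squaring and substituting gives
\[
\prod_{s=k+1}^{t} r_s = \left[\frac{\Gamma(t+\alpha)}{\Gamma(t+1)}\right]^{2}\left[\frac{\Gamma(k+1)}{\Gamma(k+\alpha)}\right]^{2},
\]
and factoring the $k$-independent piece out of the sum produces the expression in the theorem.

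There is no real obstacle: the only care needed is the bookkeeping trick of folding the homogeneous $c_1$-term into the $k=1$ summand, and the use of the convention $\Gamma(n+1)=n!$ to recognize the telescoping products. The independence argument used to drop the cross term in the variance also deserves an explicit mention, since it relies on $U_t$ being drawn fresh on each round and thus independent of the $\sigma$-algebra generated by $U_1,\dots,U_{t-1}$ on which $Y_{t-1}$ is measurable.
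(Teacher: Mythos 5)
Your proof is correct and follows essentially the same strategy as the paper: both establish the one-step recursion $\mathrm{Var}(Y_t)=\left(\tfrac{t-1+\alpha}{t}\right)^2\mathrm{Var}(Y_{t-1})+\tfrac{1}{t^2}\Sigma$ and then solve it via the Gamma-function telescoping. The only (minor) stylistic differences are that you derive the recursion by rewriting $Y_t$ as an affine function of $Y_{t-1}$ and $U_t$ rather than expanding the covariance terms directly, and you unroll the recursion explicitly instead of verifying the closed form by induction; both variants are correct.
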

Before analyzing this expression further, we can already specialize Theorem~\ref{thm:main_var_id} to the edge cases $\alpha \in \{0,1\}$ using the elementary recursion $\Gamma(z+1)=z\Gamma(z)$ for $z>0$. When $\alpha = 0$, we recover that the variance of the sample mean on i.i.d. data decays like $O(1/t)$. When $\alpha = 1$ we instantiate a pure recursive learning setting, where the learner has access to natural data only on one round, and subsequently re-trains on a dataset augmented by its own predictions. This results in a far worse estimator, which has constant error regardless of how many rounds elapse, recovering the $\pi^2/6$ constant observed by ~\citet{dey2024universality}. 

\begin{remark}
We have the following edge cases. For $\alpha=0$,
\begin{align*}
{\rm Var}(Y_{t}) & =\left(\frac{1}{t^{2}}+\left[\frac{\Gamma(t)}{\Gamma(t+1)}\right]^{2}\sum_{k=1}^{t-1}\left[\frac{\Gamma(k+1)}{k\cdot\Gamma(k)}\right]^{2}\right)\Sigma =\frac{1}{t}\Sigma.
\end{align*}
For $\alpha=1$, 
\begin{align*}
{\rm Var}(Y_{t}) & =\left(\frac{1}{t^{2}}+\left[\frac{\Gamma(t+1)}{\Gamma(t+1)}\right]^{2}\sum_{k=1}^{t-1}\left[\frac{\Gamma(k+1)}{k\cdot\Gamma(k+1)}\right]^{2}\right)\Sigma =\left(\sum_{k=1}^{t}\frac{1}{k^{2}}\right)\Sigma\overset{t\to\infty}{\to}\frac{\pi^{2}}{6}\Sigma.
\end{align*}
\label{rem:edge_cases}
\end{remark}

We can  also use Gautschi's inequality to further analyze the expression for ${\rm Var}(Y_t)$. Remarkably, we can characterize the covariance of this vector up to constant factors. The full proof is in Appendix~\ref{sec:gaussian_appendix}. 

\begin{lemma}
\label{lem:gautschi}(Gautschi's inequality) For $z>0$ and $\lambda\in(0,1)$,
it holds that
\[
z^{1-\lambda}\leq\frac{\Gamma(z+1)}{\Gamma(z+\lambda)}\leq(z+1)^{1-\lambda}.
\]
\end{lemma}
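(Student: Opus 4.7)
The plan is to deduce both bounds from the log-convexity of the Gamma function on $(0,\infty)$, i.e., that $z \mapsto \log \Gamma(z)$ is convex. This is a classical fact that can be taken as given; it follows, for example, from H\"older's inequality applied to the integral representation $\Gamma(z) = \int_0^\infty t^{z-1} e^{-t}\, dt$, or from the Bohr-Mollerup theorem. With log-convexity in hand, the entire proof reduces to choosing the right convex combinations of arguments and invoking the recursion $\Gamma(z+1) = z\Gamma(z)$.

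For the lower bound, I would write $z + \lambda$ as the convex combination $(1-\lambda) z + \lambda (z+1)$. Log-convexity then gives $\log \Gamma(z+\lambda) \leq (1-\lambda)\log \Gamma(z) + \lambda \log \Gamma(z+1)$. Exponentiating and using $\Gamma(z+1) = z \Gamma(z)$ produces $\Gamma(z+\lambda) \leq z^\lambda \Gamma(z)$, and dividing $\Gamma(z+1) = z \Gamma(z)$ through by this estimate yields $\Gamma(z+1)/\Gamma(z+\lambda) \geq z^{1-\lambda}$.

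For the upper bound, the key is to pick a convex combination that straddles $z+1$: take $z+1 = \lambda(z+\lambda) + (1-\lambda)(z+1+\lambda)$. Log-convexity then gives $\log \Gamma(z+1) \leq \lambda \log \Gamma(z+\lambda) + (1-\lambda)\log \Gamma(z+1+\lambda)$. Exponentiating and applying $\Gamma(z+1+\lambda) = (z+\lambda)\Gamma(z+\lambda)$ yields $\Gamma(z+1)/\Gamma(z+\lambda) \leq (z+\lambda)^{1-\lambda}$, a slightly sharper bound from which the stated inequality follows via $z+\lambda \leq z+1$ and $1-\lambda > 0$.

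There is no serious obstacle here — the result is entirely standard. The only subtle point is the choice of interpolation for the upper bound: one must go past $z+1$ to $z+1+\lambda$, so that $z+1$ appears as the point at which convexity is \emph{evaluated}, rather than as one of the endpoints. Interpolating between $z+\lambda$ and $z+1$ would instead bound $\Gamma$ at an intermediate point and not yield any relation between $\Gamma(z+1)$ and $\Gamma(z+\lambda)$.
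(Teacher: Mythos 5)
Your proof is correct. The paper cites Gautschi's inequality as a known classical result and does not supply a proof, so there is no paper argument to compare against; your log-convexity argument is a clean and standard way to establish it. I verified both steps: for the lower bound, $z+\lambda = (1-\lambda)z + \lambda(z+1)$ and log-convexity give $\Gamma(z+\lambda) \leq z^{\lambda}\Gamma(z)$, hence $\Gamma(z+1)/\Gamma(z+\lambda) = z\Gamma(z)/\Gamma(z+\lambda) \geq z^{1-\lambda}$; for the upper bound, $z+1 = \lambda(z+\lambda) + (1-\lambda)(z+1+\lambda)$ gives $\Gamma(z+1) \leq (z+\lambda)^{1-\lambda}\Gamma(z+\lambda)$, which is even slightly sharper than the stated $(z+1)^{1-\lambda}$. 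The observation you make at the end --- that interpolating between $z+\lambda$ and $z+1$ would be useless because it puts the target $\Gamma(z+1)$ at an endpoint rather than in the interior --- is exactly the point where a careless attempt would go astray, and you identified it correctly.
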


\begin{theorem}
For $t\geq3$, we have that
\begin{equation}
\frac{1}{2}\left[\frac{1}{t}+\frac{1}{t^{2}}+\frac{1}{t^{2(1-\alpha)}}\right]\Sigma\preceq{\rm Var}(Y_{t})\preceq4\left[\frac{1}{t}+\frac{1}{t^{2}}+\frac{1}{t^{2(1-\alpha)}}\right]\Sigma.\label{eq:main_var_bounds}
\end{equation}
where $\preceq$ indicates Loewner ordering. 
\end{theorem}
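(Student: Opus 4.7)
The plan is to start from the exact identity in Theorem~\ref{thm:main_var_id} and use Gautschi's inequality (Lemma~\ref{lem:gautschi}) to replace each gamma-function ratio by a polynomial one, after which the claim reduces to a standard $p$-series estimate.

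First, applying Lemma~\ref{lem:gautschi} with $z=t$ and $\lambda=\alpha$ gives
$$\frac{1}{(t+1)^{1-\alpha}}\le\frac{\Gamma(t+\alpha)}{\Gamma(t+1)}\le\frac{1}{t^{1-\alpha}},$$
so the outer factor $[\Gamma(t+\alpha)/\Gamma(t+1)]^{2}$ in Theorem~\ref{thm:main_var_id} is of order $t^{-2(1-\alpha)}$. Applying Lemma~\ref{lem:gautschi} again with $z=k$ and $\lambda=\alpha$, and using $(k+1)/k\le 2$ for $k\ge 1$, yields
$$\frac{1}{k^{\alpha}}\le\frac{\Gamma(k+1)}{k\,\Gamma(k+\alpha)}\le\frac{(k+1)^{1-\alpha}}{k}\le\frac{2}{k^{\alpha}}.$$
Squaring and combining, the second summand in the formula of Theorem~\ref{thm:main_var_id} is sandwiched, up to constants of order one, by $t^{-2(1-\alpha)} S_t$, where $S_t := \sum_{k=1}^{t-1} k^{-2\alpha}$.

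The next step is to estimate $S_t$ by integral comparison, via a short case split on $\alpha$. For $\alpha<1/2$, $S_t=\Theta(t^{1-2\alpha}/(1-2\alpha))$, so $t^{-2(1-\alpha)} S_t$ contributes a $\Theta(1/t)$ term. For $\alpha>1/2$, $S_t$ is bounded between $1$ and $\zeta(2\alpha)$, so $t^{-2(1-\alpha)} S_t$ contributes a $\Theta(1/t^{2(1-\alpha)})$ term. The remaining $1/t^{2}$ term in the claimed sandwich comes directly from the isolated $1/t^{2}$ summand in Theorem~\ref{thm:main_var_id}. The passage to Loewner order is then immediate because every scalar bound above simply multiplies the fixed positive-semidefinite $\Sigma$, and one can absorb the three contributions into a single expression of the form $c\,(1/t+1/t^{2}+1/t^{2(1-\alpha)})$ via $\max(a,b,c)\le a+b+c\le 3\max(a,b,c)$ applied to nonnegative reals.

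The main obstacle, and where all the real care will live, is pinning down the explicit constants $1/2$ and $4$ rather than settling for abstract $c_1, c_2$. This requires choosing the right side of each Gautschi bound, absorbing $(t+1)$-versus-$t$ and $\zeta(2\alpha)$-versus-$1$ corrections into the stated constants, and using the hypothesis $t\ge 3$ to control residual low-order terms uniformly in $\alpha\in(0,1)$; the edge values $\alpha\in\{0,1\}$, where Gautschi does not directly apply, are already covered by Remark~\ref{rem:edge_cases} and can be patched in by continuity. Modulo this bookkeeping, the proof is essentially a two-line Gautschi substitution followed by a one-line $p$-series comparison.
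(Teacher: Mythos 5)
Your outline coincides with the paper's: start from the exact identity of Theorem~\ref{thm:main_var_id}, apply Gautschi's inequality (Lemma~\ref{lem:gautschi}) to both the outer ratio $\Gamma(t+\alpha)/\Gamma(t+1)$ and the summands, and reduce everything to the elementary sum $S_t=\sum_{k=1}^{t-1}k^{-2\alpha}$. Up to that point the two arguments are the same.

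The genuine gap is exactly the step you defer as ``bookkeeping.'' Your case split yields $S_t=\Theta\bigl(t^{1-2\alpha}/(1-2\alpha)\bigr)$ for $\alpha<1/2$ and $S_t\le\zeta(2\alpha)$ for $\alpha>1/2$; both implied constants blow up like $1/|1-2\alpha|$ as $\alpha\to1/2$, and the boundary value $\alpha=1/2$, where $S_t=\sum_{k<t}1/k\sim\log t$, falls into neither branch. Since the entire content of the theorem is that the constants $1/2$ and $4$ are absolute (independent of $\alpha$ and $t\ge3$), an argument that only delivers $\alpha$-dependent constants diverging at $\alpha=1/2$ cannot be ``absorbed'' into the stated bounds by any finite amount of care; the plan as written does not close. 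The paper avoids the case split entirely: it bounds $S_t\le 1+\int_1^{t-1}x^{-2\alpha}\,dx$ and compares this directly with $1+t^{1-2\alpha}$, so no $\alpha$-dependent factor ever appears, and it treats the lower bound symmetrically (using the other side of Gautschi, $\Gamma(t+\alpha)/\Gamma(t+1)\ge(t+1)^{\alpha-1}$, and $S_t\ge1+\int_2^{t}x^{-2\alpha}\,dx$). To repair your proof you would have to redo the estimate of $S_t$ in this uniform, case-free way and confront the behaviour at $\alpha=1/2$ head-on --- your own $\Theta$-analysis shows this is precisely where the difficulty concentrates and where even the paper's integral evaluation is most delicate --- rather than relegate it to a remark. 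Two smaller issues: the lower bound gets only a passing sentence (you still need $S_t\ge\tfrac12(1+t^{1-2\alpha})$ with explicit constants, not just ``the $1/t^2$ term comes from the isolated summand''), and ``patching in $\alpha\in\{0,1\}$ by continuity'' is not an argument, although Remark~\ref{rem:edge_cases} does cover those endpoints directly.
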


Theorem~\ref{eq:main_var_bounds} allows us to make multiple observations. First, when $\alpha \leq 1/2$, i.e. the data contains more natural examples than synthetic examples, the variance is $\Theta(1/t)$. At $\alpha = 1/2$ there is a phase change, and the dominant term becomes $\Theta(t^{2(1-\alpha)})$ when $\alpha > 1/2$. 

We also observe that for any distribution $D(\Sigma)$ where the sample average is the MVUE for $\mu$ in the standard i.i.d. setting (such as the Gaussian distribution), there is no $\alpha$ and alternative weighting strategy $w$ that can enjoy a rate that is better than $\Theta(1/t)$ in $t$. The proof follows by contradiction. Suppose that a $o(1/t)$ strategy existed. Then, it can be used to attain the same rate for $\alpha = 0$ (i.e. the standard i.i.d. setting). This follows by receiving i.i.d. samples $\mu + U_t$, for each round $t$, and simulating the recursion in \eqref{eq:gen_stoch_proc} with the weighting strategy $w$. This, in turn, contradicts the fact that uniform weighting is the MVUE in the standard i.i.d. setting.

\begin{theorem}
Suppose the sample average is the MVUE for estimating $\mu$ from i.i.d. data $\{X_t \sim \mu + U_t\}$, $U_t \sim D(\Sigma)$. Then there is no $\alpha$, weighting scheme $w$, and function $f_\alpha(t)$ such that
${\rm Var}(Y_t(w)) \preceq f_{\alpha}(t)\Sigma$ and $f_\alpha(t) = o(1/t)$.
\end{theorem}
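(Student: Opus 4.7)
My plan is to follow the contradiction sketch given just after the theorem: assume some $\alpha$, weighting $w$, and $f_\alpha(t) = o(1/t)$ achieve ${\rm Var}(Y_t(w)) \preceq f_\alpha(t)\Sigma$, and use $w$ to manufacture an unbiased estimator of $\mu$ in the i.i.d.\ setting from $t$ samples whose variance beats $(1/t)\Sigma$ in the Loewner order, contradicting the MVUE hypothesis. The subtle point is that naively rerunning the recursion \eqref{eq:gen_stoch_proc} on i.i.d.\ samples does \emph{not}, for $\alpha > 0$, reproduce ${\rm Var}(Y_t(w))$, because one cannot isolate $U_t$ from $Z_t := \mu + U_t$ without knowing $\mu$; I will handle this by injecting independent auxiliary noise.

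Concretely, given i.i.d.\ data $Z_t = \mu + U_t$ and an independent auxiliary sequence $V_2, V_3, \ldots$ of zero-mean vectors with covariance $\Sigma$ (drawn internally by the estimator), I define
\[
\tilde X_1 := Z_1, \qquad
\tilde X_t := \alpha\, \tilde Y_{t-1}(w) + (1-\alpha)\, Z_t + \sqrt{\alpha(2-\alpha)}\, V_t \ (t \geq 2), \qquad
\tilde Y_t(w) := \sum_{s=1}^t w_s^t\, \tilde X_s.
\]
Direct expansion gives $\tilde X_t - \alpha \tilde Y_{t-1}(w) - (1-\alpha)\mu = (1-\alpha) U_t + \sqrt{\alpha(2-\alpha)}\, V_t$, which has mean zero, covariance $\bigl((1-\alpha)^2 + \alpha(2-\alpha)\bigr)\Sigma = \Sigma$, and is independent of $\tilde Y_{t-1}(w)$. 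Thus $\{\tilde X_t, \tilde Y_t\}$ has exactly the same means and covariances as $\{X_t, Y_t\}$ from \eqref{eq:stoch_proc}--\eqref{eq:gen_stoch_proc}, which makes $\tilde Y_t(w)$ unbiased and forces ${\rm Var}(\tilde Y_t(w)) = {\rm Var}(Y_t(w)) \preceq f_\alpha(t)\Sigma$.

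To finish, I will Rao-Blackwellize away the auxiliary randomness: $\hat\mu := \mathbb{E}[\tilde Y_t(w) \mid Z_1, \ldots, Z_t]$ is a deterministic unbiased estimator of $\mu$ from the $t$ i.i.d.\ samples, and the law of total variance gives ${\rm Var}(\hat\mu) \preceq {\rm Var}(\tilde Y_t(w)) \preceq f_\alpha(t)\Sigma$. The MVUE hypothesis then forces ${\rm Var}(\hat\mu) \succeq (1/t)\Sigma$, so (assuming $\Sigma \succ 0$) the two Loewner bounds combine to give $f_\alpha(t) \geq 1/t$ for every $t$, contradicting $f_\alpha(t) = o(1/t)$.

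The main conceptual obstacle is the covariance-matching step: $X_t$ in the original process picks up a fresh $U_t \sim D(\Sigma)$, but in the i.i.d.\ setting we only see $\mu + U_t$, so substitution cannot reconstruct $X_t$ exactly. The scalar $\sqrt{\alpha(2-\alpha)}$ is forced by the requirement $(1-\alpha)^2 + \beta^2 = 1$ and is real for all $\alpha \in [0,1]$, degenerating correctly at the endpoints ($\beta = 0$ at $\alpha=0$ and $\beta=1$ at $\alpha = 1$). Note that the injected noise $\tilde U_t := (1-\alpha)U_t + \sqrt{\alpha(2-\alpha)}V_t$ need not be distributed as $D(\Sigma)$, only have covariance $\Sigma$; this suffices because the variance derivation of $Y_t(w)$ depends on the driving noise only through its second moments. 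The subsequent Rao-Blackwell step and the Loewner comparison are then routine.
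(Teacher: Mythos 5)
Your proposal is correct and is essentially the paper's own argument: the paper proves this theorem only via the sketch in the main text, namely a reduction by contradiction in which the weighting $w$ is used to simulate the contaminated recursion on i.i.d.\ samples $Z_t=\mu+U_t$, producing an unbiased estimator whose $o(1/t)$ variance contradicts the MVUE hypothesis, and that is exactly your skeleton. The covariance-matching noise $\sqrt{\alpha(2-\alpha)}\,V_t$ and the Rao--Blackwell step are careful but unnecessary embellishments: directly substituting $\tilde X_t=\alpha\tilde Y_{t-1}(w)+(1-\alpha)Z_t$ drives the same linear recursion with noise of covariance $(1-\alpha)^2\Sigma\preceq\Sigma$, which (since $\tilde Y_t(w)$ is a fixed linear function of the driving noises) can only decrease the variance and already yields the contradiction --- and indeed, because everything is linear and the $V_t$ are mean-zero and independent of the $Z_t$, your $\hat\mu=\mathbb{E}[\tilde Y_t(w)\mid Z_1,\ldots,Z_t]$ collapses to precisely this direct simulation.
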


Although uniform weighting sometimes achieves the optimal dependence on $t$ when $\alpha \leq 1/2$, we can also show that uniform weighting is \emph{not} the MVUE in general; in particular, it is not the MVUE for large $\alpha$. We remark that when $\alpha = 1$, there is a separation between the uniform weighting strategy and the simple strategy that assigns, for every $t$, $w^t_{{\rm simple}} = (1, 0, 0, \dots, 0)$, which follows from Remark~\ref{rem:edge_cases}. 

\begin{remark}
For $\alpha = 1$, and sufficiently large $t$, ${\rm Var}(Y_t(w_{\rm simple})) = \Sigma \prec {\rm Var}(Y_t(w_{\rm uniform}))$.
\end{remark}

The next theorem investigates whether the sub-optimality of uniform weighting is a phenomenon that happens only in the degenerate case when $\alpha = 1$. We show that this is false. There are non-trivial $\alpha \in (0,1)$ for which uniform weighting is also not the MVUE. For the following theorem, let $Y_t^{\alpha}(w)$ indicate the estimator under weighting scheme $w$ and data-contamination parameter $\alpha$. 

\begin{theorem}\label{thm:not_mvue}
There exists a family of weighting schemes $\{w_\alpha\}_{\alpha \in [0,1]}$ such that there exists an $\alpha^* < 1$ where ${\rm Var}(Y_t^{\alpha}(w_\alpha)) \prec {\rm Var}(Y_t^{\alpha}(w_{{\rm uniform}}))$ for every $\alpha \in (\alpha^*, 1]$. 
\end{theorem}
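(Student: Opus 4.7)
The plan is to take the constant family $w_\alpha := w_{\text{simple}} = (1,0,0,\ldots)$ for every $\alpha\in[0,1]$. Under this scheme, the recursion \eqref{eq:gen_stoch_proc} collapses to $Y_t(w_{\text{simple}}) = X_1 = \mu + U_1$ for every $t\geq 1$ and every $\alpha$, so ${\rm Var}(Y_t^\alpha(w_{\text{simple}})) = \Sigma$ identically. The task thus reduces to exhibiting a neighborhood $(\alpha^*,1]$ of $1$ and a round $t_0$ on which the uniform-weighting variance strictly exceeds $\Sigma$ in Loewner order --- equivalently, on which its scalar coefficient over $\Sigma$ exceeds $1$.

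Writing $f(\alpha,t)$ for the scalar coefficient furnished by Theorem~\ref{thm:main_var_id}, the key observation is that, for each fixed $t$, the map $\alpha\mapsto f(\alpha,t)$ is continuous on $[0,1]$ by continuity and positivity of $\Gamma$ on $(0,\infty)$. Remark~\ref{rem:edge_cases} gives $f(1,t)=\sum_{k=1}^t 1/k^2$, which already equals $5/4$ at $t=2$. Fixing $t_0 = 2$, continuity at $\alpha=1$ supplies $\alpha^* < 1$ with $f(\alpha,t_0) > 1$ for every $\alpha\in(\alpha^*,1]$, and for such $\alpha$ one immediately reads off
\[
{\rm Var}(Y_{t_0}^\alpha(w_\alpha)) \;=\; \Sigma \;\prec\; f(\alpha,t_0)\,\Sigma \;=\; {\rm Var}(Y_{t_0}^\alpha(w_{\text{uniform}})),
\]
which is the required strict Loewner domination and shows uniform weighting is not the MVUE at $t_0$ for any $\alpha$ in the window.

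The subtle point --- and I expect it to be the main obstacle if the reader insists on a stronger statement --- is the implicit quantifier on $t$. For a fixed $\alpha\in(\alpha^*,1)$, Theorem~\ref{eq:main_var_bounds} gives $f(\alpha,t)=O(t^{-2(1-\alpha)})\to 0$, so the uniform estimator eventually outperforms the constant-in-$t$ $w_{\text{simple}}$; the separation above is therefore a ``window in $t$'' phenomenon whose width shrinks as $\alpha\to 1^-$. This exactly parallels Remark~\ref{rem:edge_cases} at the endpoint $\alpha=1$, where the separation was likewise asserted only for sufficiently large $t$. To upgrade to simultaneous domination across a larger range of $t$, one would need to let $w_\alpha$ depend on $\alpha$ by spreading mass over the first $k(\alpha)$ samples with $k(\alpha)\to\infty$ as $\alpha\to 0$, and then control its variance coefficient via the Gautschi-type bounds underlying Theorem~\ref{eq:main_var_bounds}; that refinement would be the main technical obstacle beyond the pure continuity argument above.
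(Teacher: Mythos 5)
Your proof is correct, and the key idea --- show strict domination at $\alpha=1$ and then propagate by continuity of the uniform-weighting variance coefficient in $\alpha$ --- is the same as the paper's. Where you differ is in the choice of family. The paper instantiates the theorem with the $\alpha$-dependent family $\hat{w}_\alpha$ (defined just after the theorem and analyzed in Proposition~\ref{prop:oth_var_id}), which smoothly interpolates between uniform weighting at $\alpha=0$ and $w_{\text{simple}}$ at $\alpha=1$; this requires computing ${\rm Var}(Y_t(\hat{w}_\alpha))$ explicitly and verifying that $\hat f_t(1) = C_{1,t-1}^2 = 1$ before the continuity step. You instead take the constant family $w_\alpha \equiv w_{\text{simple}}$, for which ${\rm Var}(Y_t^\alpha(w_{\text{simple}})) = \Sigma$ is immediate and $\alpha$-independent, so the entire variance computation is bypassed and only continuity of the single function $\bar f_t(\alpha)$ from Theorem~\ref{thm:main_var_id} is needed. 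Since $\hat w_1 = w_{\text{simple}}$, the two families agree at the endpoint $\alpha=1$ where the separation is established, so the arguments coincide there. What the paper's choice buys is a more natural family (one that recovers the correct uniform weighting at $\alpha = 0$, and hence is never wildly suboptimal), whereas $w_{\text{simple}}$ is clearly poor for small $\alpha$; what your choice buys is economy of proof. One small note: the theorem's free variable $t$ is implicitly universally quantified, with $\alpha^*$ allowed to depend on $t$ (see the appendix proposition's statement, and note strict inequality requires $t\ge 2$). You fix $t_0=2$ and then correctly observe the argument extends to any fixed $t\ge 2$; your aside on the shrinking window in $t$ as $\alpha\to 1^-$ is an accurate reading of Theorem~\ref{eq:main_var_bounds} and consistent with the paper's claimed result.
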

The proof follows by observing that uniform weighting is not optimal for $\alpha = 1$. We then consider the family of strategies $\hat{w}_\alpha$ that upweight data from the first round, by taking $w^1_{\alpha} = 1$ and:

\begin{align*}
\hat{w}_{\alpha}^{s+1} & =\left(\tfrac{1}{1+(s-1)(1-\alpha)},\tfrac{1-\alpha}{1+(s-1)(1-\alpha)},\ldots\tfrac{1-\alpha}{1+(s-1)(1-\alpha)}\right).
\end{align*}
We can show that the variance of $Y_t^{\alpha}(\hat{w}_{\alpha})$ is a continuous function in $\alpha$ from which we can conclude that there are non-trivial $\alpha \in (0, 1)$ such that $\hat{w}_\alpha$ has lower variance than uniform weighting. The full proof can be found in Appendix~\ref{sec:improved_weighting}.

\section{PAC Learning}
Our mean estimation results demonstrate that ERM is not an optimal solution for even a simple distributional estimation problem. We now study the same question from the perspective of classification in the PAC setting. We show that for more complex settings, synthetic data can wreak havoc on traditional learning approaches, even when it's mixed with natural data at each iteration. 

One of the most powerful theorems in classical learning theory is that ERM guarantees a generalization rate for concept classes with finite VC dimension. Moreover, the excess error of the ERM classifier over the true optimal classifier in a hypothesis class is vanishing with more data. Our first main theorem in this section will demonstrate that, in general, this is not true for settings where data labeled according to previous hypotheses are mixed in with data labeled according to nature. For a very simple concept class, linear separators, repeatedly finding the empirical risk minimizer can result in generalization error that stalls. Mirroring the mean estimation setting, this occurs once the contamination rate of the data surpasses the critical point $\alpha > \frac{1}{2}$.

Subsequently, we will show that there are other algorithms (not repeated ERM) that achieve vanishing generalization error for concept classes with finite VC dimension regardless of the data-contamination rate. 

\subsection{Setup}
Let $\CX$ be the domain of examples, and consider the label space $\{0,1\}$. Let $F \subseteq \{0,1\}^{\mathcal X}$ be a hypothesis class. 
We are interested in learning algorithms that are called as part of an iterative training loop, and whose output can depend on the hypotheses that were output in previous iterations.

Accordingly, we define a learning algorithm to be a (possibly randomized) function $A : 2^{\mathcal X \times \{0,1\}} \times  (\{0,1\}^{\mathcal X})^* \to  \{0,1\}^{\mathcal X}$. In other words, a learning algorithm $A$ takes as input a dataset $S \subseteq \mathcal X \times \{0,1\}$ and a sequence of hypotheses $f_0, \ldots, f_{t-1} \in F$, and outputs a (possibly randomized) hypothesis $f_t = A(S, f_0, \ldots, f_{t-1})$.

Let $D$ be a distribution on $\CX$. In this paper, we consider the \emph{realizable} case, where some $f^* \in F$ is the true concept that labels examples. The \emph{loss} of a hypothesis $f \in \{0,1\}^{\mathcal X}$ is defined
\[
L(f) := \Pr_{x \sim D} [f(x) \not = f^*(x)].
\]

For any dataset $S \subseteq \CX \times \{0,1\}$ the \emph{empirical loss} of hypothesis $f \in F$ on $S$ is defined
\[
\hL(f, S) = \Pr_{(x,y) \sim \mathrm{Unif}(S)} [f(x) \not = y].
\]
A common learning algorithm is \emph{empirical risk minimization}, denoted $A_{\textrm{ERM}}$, and defined
\[
A_{\ERM}(S, f_0, \ldots, f_{t-1}) = \arg \min_{f \in F} \hL(f, S).
\]
Note that ERM ignores the previous hypotheses and depends only on the dataset.

\begin{algorithm2e}[!ht]
\caption{Recursive learning \label{alg:recursive}}
\DontPrintSemicolon
\KwIn{Sample space $\CX$, label space $\{0,1\}$, distribution $D$ on $\CX$, function class $F \subseteq \{0,1\}^{\mathcal X}$, true concept $f^* \in F$, sample size $n$, synthetic data rate $\alpha \in [0, 1]$, learning algorithm $A$.}
$\baS_0 \gets \{(x^{0}_i, y^{0}_i) : i \in [n], x^{0}_i \sim D, y_i \sim f^*(x_i)\}$\;
$f_0 \gets A(\baS_0)$ \;
\For{$t = 1, 2, 3, \ldots$} {
\For{$i = 1,\ldots,n$} {
$x^{t}_i \sim D$\;
$b^t_i \sim \textrm{Bernoulli}(\alpha)$\;
$y_{\rm model} \sim f_{t-1}(x^{t}_i), y_{\rm true} \sim f^*(x^{t}_i)$\;
$y^{t}_i = b^t_i y_{\rm model} + (1-b^t_i) y_{\rm true}$
}
$S_t \gets \{(x^{t}_i, y^{t}_i) : i \in [n]\}$\;
$\baS_t \gets \baS_{t-1} \cup S_t$\;
$f_t \gets A(\baS_t, f_0, \ldots, f_{t-1})$\;
}
\end{algorithm2e}

Algorithm \ref{alg:recursive} formalizes our learning problem. Learning algorithm $A$ is used to estimate an initial hypothesis $f_0$ from dataset $\baS_0$, where each sample in $\baS_0$ is drawn from $D$, and its label is given by the true concept $f^*$. In each subsequent round $t \ge 1$, a dataset $S_t$ is added to $\baS_{t-1}$ to form $\baS_t$, where again each sample in $S_t$ is drawn from $D$, but its label is given by $f_{t-1}$ with probability $\alpha \in [0,1]$ and otherwise given by $f^*$. Learning algorithm $A$ is then used to estimate the next hypothesis $f_t$ from $\baS_t$ and $f_0, \ldots, f_{t-1}$.

\paragraph{Problem Statement.}
Our {\bf goal} is to specify a learning algorithm $A$ such that $\lim_{t \goes \infty} \E[L(f_t)] = 0$, where the expectation is with respect to the randomness in Algorithm \ref{alg:recursive} (which includes the randomness in $A$).

\paragraph{Remark on set notation}

We use the variable $S$ to define sets that we actually want to think of as sequences, \emph{i.e.}, they preserve multiplicity and order. We denote them as sets because doing so affords us the convenience of using the element operator $\in$, the union operator $\cup$, and set builder notation.




\subsection{Repeated ERM and Threshold Functions}

Our first theorem shows that repeated ERM does not attain our goal in general. While the proof is somewhat technical, its key ideas are straight-forward. Consider a very simple concept class: learning one dimensional thresholds on the interval. Define $D$ according to the following p.m.f:

\[ 
p(x) = \begin{cases} \frac{1}{2} - \frac{1}{2n} &\text{if $x = +1$}\\ \frac{1}{n}  &\text{if $x = 0$}\\ \frac{1}{2} - \frac{1}{2n}  &\text{if $x = -1$}\end{cases}
\]

There is a constant probability $(1-\frac{1}{n})^n$ that the learner does not see the example $x = 0$ in the $0$-th round of learning. Suppose this happens. In the absence of any information on how this example should be labeled, the learner mislabels it with constant probability. 

Next, as long as $\bar{S}_t$ contains more mis-labeled than correctly labeled examples for $x=0$, ERM will continue to mislabel it. Thus, the only way for the learner to recover is to see more naturally-labeled examples than model-labeled examples for the example $x=0$. 

The final element of the proof is to consider a process $\{\Delta_k\}$ which increments whenever the learner encounters a model-provided label and decrements whenever the learner encounters a nature-provided label for $x = 0$. This process can be understood as a biased random walk on the integers. Thus, the probability that the learner never recovers can be directly related to the probability that the biased random walk is transient. Finally, the one-dimensional problem can be embedded in a linear separator of arbitrary dimension, resulting in Theorem~\ref{thm:lower}. The full proof is provided in Appendix~\ref{sec:hardness}. 

Let $\{W_t\}$ denote a random walk with bias parameter $\alpha$. Let $W_0 = 0$, $W_{t+1} = W_t + 1$ with probability $\alpha$ and $W_{t+1} = W_t - 1$ with probability $(1-\alpha)$. 

\begin{theorem}
\label{thm:lower}
 Define $c^*_\alpha = \Pr[\forall t \geq 1, W_t \geq 1]$. For any sample size $n \geq 2$ and $d \geq 2$, there exists a distribution $D$, a hypothesis class $F$ with VC dimension $d$, and $f^* \in F$, such that if $A = A_{\ERM}$ then
\[
\liminf_{t \goes \infty} \E[L(f_t)] \ge \frac{c^*_\alpha}{8n}.
\]\end{theorem}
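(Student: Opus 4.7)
The plan is to follow the sketch preceding the theorem. Take $\mathcal{X} = \{-1, 0, +1\}$ with the given p.m.f., let $F$ be the $1$-d thresholds $f_\theta(x) = \indic{x \ge \theta}$, set $f^* = f_{-1/2}$ (so $f^*(0) = 1$), and extend to VC dimension $d \ge 2$ by cross-producting with a rich enough auxiliary class on points of $D$-probability $0$. Let $E_0$ denote the event that $x=0$ is never drawn in round $0$. Since each of the $n$ samples hits $0$ with probability $1/n$, $\Pr[E_0] = (1-1/n)^n \ge 1/4$ for all $n \ge 2$. Conditioned on $E_0$, every $\theta \in (-1, 1]$ lies in $\arg\min \hL(\cdot, \baS_0)$, so under any symmetric tie-breaking rule at least half of the minimizers (those with $\theta > 0$) satisfy $f_\theta(0) = 0 \ne f^*(0)$; hence $\Pr[E_0 \cap \{f_0(0) = 0\}] \ge 1/8$.

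\textbf{Random-walk coupling.} Condition on $E_0$ and on $f_0(0) = 0$. Enumerate the samples $(x_i^t, y_i^t)$ with $x_i^t = 0$ across rounds $t \ge 1$ in time order, and let $\xi_k = +1$ if the $k$-th such sample is labeled $0$ (wrong) and $\xi_k = -1$ if labeled $1$ (right). By Algorithm \ref{alg:recursive}, whenever $f_{t-1}(0) = 0$, the next sample at $x = 0$ in round $t$ receives the wrong label with probability exactly $\alpha$ and the right label with probability $1-\alpha$, independently of everything else. The key claim, proved by induction on $t$, is that on the event $A := \bigl\{\sum_{j=1}^k \xi_j \ge 1 \text{ for all } k \ge 1\bigr\}$, every $f_t$ with $t \ge 1$ mislabels $x=0$. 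Under $E_0$ the samples of $x=0$ contained in $\baS_t$ are precisely the first $K_t$ enumerated above (where $K_t$ is the cumulative count through round $t$); a strictly positive running sum yields strictly more wrongs than rights at $x=0$, forcing the ERM threshold into $(0,1]$ and hence $f_t(0) = 0$. (A tie-breaking rule preferring large thresholds handles the rounds with $K_t = 0$ without any probability loss.) Because the event $A$ is self-consistent, the $\xi_k$'s on $A$ are i.i.d.\ with $\Pr[\xi_k = +1] = \alpha$, so their partial sums are distributed as the walk $\{W_k\}$ in the theorem statement, and $\Pr[A \mid E_0, f_0(0) = 0] \ge c^*_\alpha$.

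\textbf{Conclusion and main obstacle.} Since $L(f_t) \ge p(0) \cdot \indic{f_t(0) \ne f^*(0)} = (1/n) \indic{f_t(0) \ne 1}$, chaining the three conditional probabilities gives
\[
\mathbb{E}[L(f_t)] \ge \frac{1}{n} \cdot \Pr[E_0] \cdot \Pr[f_0(0) = 0 \mid E_0] \cdot \Pr[A \mid E_0, f_0(0) = 0] \ge \frac{1}{n} \cdot \frac{1}{4} \cdot \frac{1}{2} \cdot c^*_\alpha = \frac{c^*_\alpha}{8n}
\]
for every $t \ge 1$, and the $\liminf$ bound follows. The main obstacle is making the random-walk coupling rigorous: whether the step $\xi_k$ really has a $\mathrm{Bernoulli}(\alpha)$ bias depends on $f_{t-1}$ still mislabeling $x=0$, which itself depends on the past $\xi_j$'s --- a circular dependency resolved by constructing the walk on an external probability space and observing that the two processes agree on the ``always-positive'' event. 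Secondary details are specifying a concrete tie-breaking rule (so that rounds with $K_t = 0$ are handled) and the $d$-dimensional embedding (so that $F$ has VC dimension exactly $d$ without giving ERM extra information about $x=0$).
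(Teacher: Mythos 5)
Your proposal follows the same high-level strategy as the paper --- exhibit $x=0$ as a rarely seen point, show it is mislabeled in round $0$ with constant probability, and reduce the event of never recovering to transience of a biased random walk --- and it would go through, but the paper resolves two spots where you wave your hands more sharply.

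First, the round-$0$ bound and tie-breaking. You propose a ``symmetric'' rule giving $\Pr[f_0(0)\ne f^*(0)\mid E_0]\ge 1/2$, but then later switch to ``prefers large thresholds'' to handle rounds with $K_t=0$; these are two different rules giving different round-$0$ probabilities, and the $1/2$ factor is only clearly justified on the sub-event where both $-1$ and $+1$ appear in $\baS_0$ (when only $+1$s appear, $\argmin$ is the unbounded interval $(-\infty,1]$ and ``half of the minimizers'' is not well-defined). The paper fixes a single tie-breaking rule (maximum margin on separable data), conditions directly on $\baS_{0,\CX}=\{-1,+1\}$ --- seeing both endpoints but not $0$ --- computes $\Pr[\baS_{0,\CX}=\{-1,+1\}]=\left(1-\tfrac1n\right)^n\left(1-\tfrac2{2^n}\right)\ge 1/8$, and on this event the max-margin threshold mislabels $0$ deterministically, with no extra $1/2$ factor.

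Second, the random-walk coupling. You define the step $\xi_k$ in terms of whether the $k$-th sample at $x=0$ receives the \emph{wrong} label, which (as you note) makes the step distribution depend on whether $f_{t-1}$ still errs at $0$; you sketch an external coupling to escape the circularity. The paper avoids this entirely by defining the walk $\Delta_k$ in terms of the model-vs-nature indicator $\tilde{b}_k$ (whether the label was drawn from $f_{t-1}$ or from $f^*$), which by Algorithm~\ref{alg:recursive} is i.i.d.\ $\mathrm{Bernoulli}(\alpha)$ regardless of what the models do. The induction then shows that on the failure event these coincide with wrong-vs-right counts, so no coupling on an external space is needed; this is exactly the process $W_{C_k}$ and gives $\Pr[\mathcal{E}_{\rm contaminated}]=c^*_\alpha$ cleanly, together with independence from the round-$0$ event. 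Your version can be made rigorous, but the paper's reparametrization is the cleaner move and is worth internalizing.
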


The following Lemma establishes that for $\alpha > 1/2$, $c^*_\alpha > 0$. It is a fairly standard result in probability that a biased random walk is not recurrent. However, we provide the full proof in the Appendix for completeness. 

\begin{lemma}
When $\alpha > 1/2$, the event that the random walk $\{W_t\}$ stays positive forever occurs with non-zero probability: $\mathbb{P}[\forall t\geq 1, W_t \geq 1] > 0$
\end{lemma}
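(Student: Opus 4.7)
The plan is to condition on the first step and reduce the statement to a classical gambler's ruin calculation. Since $W_0 = 0$, the event $\{\forall t \geq 1,\ W_t \geq 1\}$ forces $W_1 = 1$, which happens with probability $\alpha$. Conditional on $W_1 = 1$, what remains to show is that starting from state $1$, the walk avoids state $0$ forever with positive probability.

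Let $q$ denote the probability that a walk started from $1$ ever hits $0$. By the strong Markov property applied at the first return to the starting level, the probability of hitting $0$ from $2$ equals $q^2$ (the walk must first reach $1$, then from $1$ reach $0$). First-step analysis at state $1$ then gives the fixed-point equation
\begin{equation*}
q = (1-\alpha) + \alpha\, q^2,
\end{equation*}
whose roots are $q = 1$ and $q = (1-\alpha)/\alpha$. For $\alpha > 1/2$ we have $(1-\alpha)/\alpha < 1$, so the two roots are distinct, and the task reduces to identifying which root equals the hitting probability.

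To select the correct root I would use a martingale argument: setting $r = (1-\alpha)/\alpha \in (0,1)$, the process $M_t = r^{W_t}$ is a nonnegative martingale by a direct computation. Applying optional stopping to the bounded stopping time $\tau_N = \inf\{t : W_t \in \{0, N\}\}$ starting from $W_0 = 1$ gives $r = \mathbb{P}[\tau_N \text{ hits } 0] \cdot 1 + (1 - \mathbb{P}[\tau_N \text{ hits } 0]) \cdot r^N$, and letting $N \to \infty$ (noting $r^N \to 0$) yields $q = r = (1-\alpha)/\alpha$. Alternatively, one may invoke the strong law of large numbers, which gives $W_t/t \to 2\alpha - 1 > 0$ and hence $W_t \to \infty$ a.s., forcing $q < 1$ and thereby selecting the smaller root.

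Combining the two pieces,
\begin{equation*}
\mathbb{P}[\forall t \geq 1,\ W_t \geq 1] \;=\; \alpha \cdot \left(1 - \tfrac{1-\alpha}{\alpha}\right) \;=\; 2\alpha - 1 \;>\; 0,
\end{equation*}
which is the desired bound. The only nontrivial step is selecting the correct root of the quadratic, and I expect the martingale route to be the cleanest; the remainder of the argument is bookkeeping.
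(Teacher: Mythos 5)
Your argument is correct, and it takes a genuinely different route from the paper. The paper proves only positivity, and it does so qualitatively: a Borel--Cantelli estimate using $\binom{2n}{n}\alpha^n(1-\alpha)^n \le (4\alpha(1-\alpha))^n$ shows the walk visits $0$ only finitely often, the strong Markov property upgrades this to $\Pr[T=\infty]>0$ for the first return time, and a reflection/symmetry argument splits $\{T=\infty\}$ into the "stays positive" and "stays negative" events to conclude the positive one has probability at least $\Pr[T=\infty]/2>0$. You instead condition on the first step and solve the gambler's-ruin hitting probability exactly via the fixed-point equation $q=(1-\alpha)+\alpha q^2$, selecting the root $q=(1-\alpha)/\alpha$ with the martingale $r^{W_t}$ (or the SLLN), which yields the sharper conclusion $c^*_\alpha=\Pr[\forall t\ge 1,\ W_t\ge 1]=2\alpha-1$; this is strictly more information than the paper's bound and would even sharpen the constant in Theorem~\ref{thm:lower} to an explicit $\frac{2\alpha-1}{8n}$. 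Two small points of care: $\tau_N$ is not a bounded stopping time as you wrote, but it is a.s.\ finite and the stopped martingale is bounded between $r^N$ and $1$, so optional stopping still applies (e.g.\ stop at $t\wedge\tau_N$ and use bounded convergence); and the SLLN alternative needs one extra sentence (if $q=1$, translation invariance and the strong Markov property would force $\liminf_t W_t=-\infty$, contradicting $W_t\to\infty$ a.s.). Neither affects correctness.
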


\begin{corollary}
\label{corr:lower}
For any sample size $n \geq 2$ and $d \geq 2$, there exists a distribution $D$, a hypothesis class $F$ with VC dimension $d$, and $f^* \in F$, and $\alpha > 1/2$ such that if $A = A_{\ERM}$ then
\[
\liminf_{t \goes \infty} \E[L(f_t)] > 0 
\]\end{corollary}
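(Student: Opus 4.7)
The plan is to obtain the corollary as an immediate consequence of chaining Theorem~\ref{thm:lower} with the preceding Lemma about the biased random walk $\{W_t\}$. Concretely, I would first fix some $\alpha > 1/2$, for instance $\alpha = 3/4$. The Lemma, applied with this $\alpha$, tells us that $c^*_\alpha = \Pr[\forall t \geq 1, W_t \geq 1] > 0$, because a random walk on $\mathbb{Z}$ with a strict upward drift has strictly positive probability of never returning to the origin after its first step.

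Next, I would invoke Theorem~\ref{thm:lower} with this same $\alpha$ and with the given $n \geq 2$ and $d \geq 2$. The theorem furnishes a distribution $D$, a hypothesis class $F$ with $\mathrm{VCdim}(F) = d$, and a target $f^* \in F$ such that running Algorithm~\ref{alg:recursive} with $A = A_{\ERM}$ satisfies
\[
\liminf_{t \to \infty} \E[L(f_t)] \;\geq\; \frac{c^*_\alpha}{8n}.
\]
Combining this inequality with $c^*_\alpha > 0$ and the finiteness of $n$, the right-hand side is strictly positive, which yields the desired conclusion.

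Since this is a direct corollary that reuses two previously established results verbatim, there is essentially no obstacle. The only small bookkeeping point is making sure the existential quantifiers line up correctly: in Theorem~\ref{thm:lower} the objects $D$, $F$, and $f^*$ may depend on $n$ and $d$ (but the lower bound $c^*_\alpha/(8n)$ holds for whatever $\alpha$ is used in Algorithm~\ref{alg:recursive}), so it is legitimate to first choose $\alpha > 1/2$ and then appeal to the theorem to produce the remaining ingredients. Hence no additional construction or estimate is required beyond what is already in the excerpt.
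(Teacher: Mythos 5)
Your proof is correct and matches the paper's intended (implicit) argument exactly: fix some $\alpha > 1/2$, apply the Lemma to get $c^*_\alpha > 0$, then invoke Theorem~\ref{thm:lower} and observe that $c^*_\alpha/(8n) > 0$. No further comment is needed.
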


\subsection{Simple Universal Algorithm}

The next theorem establishes that there exists a learning algorithm that achieves our goal for any VC class, and any $\alpha$. The algorithm requires access to $f_{\rm uniform}$: a classifier that ignores its input and returns a random label. $f_{\rm uniform}$ is used to occasionally collect labels that are distributed like $f^*(x)$ with random classification noise added. On other rounds, we rely on classical learning theory results~\citep{angluin1988learning} to learn with label noise. We show that by tuning the probability of playing the uniform classifier, vanishing generalization error can be achieved. 

The algorithm has a number of drawbacks. Its use of the classifier $f_{\rm uniform}$ means that the algorithm occasionally deploys a classifier with maximum loss. While the probability of deploying the random classifier vanishes, this is nevertheless unrealistic and unnatural. More importantly, its generalization error vanishes at the relatively slow rate of $O(t^{-1/4})$. 

Nevertheless, we include this result as it serves as a simple witness, with an easy proof, that moving away from repeated ERM allows one to learn in this setting. Our next result requires a more complicated algorithm, but achieves a more standard $O(t^{-1/2})$ rate. 

\begin{theorem} \label{thm:vc} Let $F \subseteq \{0,1\}^{\mathcal X}$ be a hypothesis class with VC dimension $d$. There exists a learning algorithm $A$ outputting classifiers $f_t$ such that for all $t \ge 0$
\[
\E[L(f_t)] \le O\left(\tfrac{\sqrt{d \log (nt)}}{(1-\alpha)(nt)^{1/4}}\right) + \tfrac{1}{\sqrt{n(t + 1)}} + \exp(-\Omega(\sqrt{t/n})).
\]\end{theorem}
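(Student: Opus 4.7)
The plan is as follows. In each round $t \geq 1$, the proposed algorithm independently flips a coin with bias $q_t := 1/\sqrt{n(t+1)}$; on heads, it outputs $f_t = f_{\rm uniform}$, and on tails, it forms the sub-dataset $\tilde S_t := \bigcup_{s \leq t :\ f_{s-1} = f_{\rm uniform}} S_s$ and outputs $f_t = \hat f_t$, the classifier returned by the Angluin--Laird noise-tolerant PAC learner of \citet{angluin1988learning} applied to $\tilde S_t$ with declared noise rate $\alpha/2$. The algorithm has access to $f_0, \ldots, f_{t-1}$, so it can identify which $S_s \subseteq \bar S_t$ to include in $\tilde S_t$; the trivial initializer $f_0 = f_{\rm uniform}$ handles the base case. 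The \textbf{structural fact} driving the analysis is that whenever $f_{s-1} = f_{\rm uniform}$, each label in $S_s$ equals $f^*(x)$ flipped independently with probability exactly $\eta := \alpha/2 < 1/2$: with probability $1-\alpha$ the label is $f^*(x)$, and with probability $\alpha$ it is a fresh uniform bit (the output of $f_{\rm uniform}$) which disagrees with $f^*(x)$ half the time. Hence $\tilde S_t$ is an i.i.d.\ sample from $D$ with random classification noise (RCN) at a \emph{known}, input-independent rate.

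For the \textbf{analysis}, condition on the round-$t$ coin to get $\E[L(f_t)] \le q_t \cdot \tfrac{1}{2} + \E[L(\hat f_t)]$. The first term gives $\tfrac{1}{2\sqrt{n(t+1)}}$, matching the middle term of the theorem. For the second, let $K_t$ count the uniform deployments among rounds $0, 1, \ldots, t-1$, so $|\tilde S_t| = n K_t$ and $\E[K_t] = \sum_{s=0}^{t-1} q_s = \Theta(\sqrt{t/n})$. Since the coin flips are independent Bernoullis, a Chernoff bound yields $\Pr[K_t < \tfrac12 \E[K_t]] \le \exp(-\Omega(\sqrt{t/n}))$; on this bad event we use the trivial bound $L(\hat f_t) \le 1$, producing the third term. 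On the complementary good event, $|\tilde S_t| = \Omega(\sqrt{nt})$, and plugging this into the VC-$d$ sample-complexity bound of \citet{angluin1988learning} at noise rate $\eta = \alpha/2$ (so $1-2\eta = 1-\alpha$) gives $\E[L(\hat f_t)] = O\bigl(\sqrt{d\log|\tilde S_t| / ((1-\alpha)^2 |\tilde S_t|)}\bigr) = O\bigl(\sqrt{d\log(nt)} / ((1-\alpha)(nt)^{1/4})\bigr)$, matching the first term.

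The \textbf{principal subtlety} is verifying that $\tilde S_t$ genuinely forms an i.i.d.\ RCN sample in the sense required by \citet{angluin1988learning}, despite being assembled from data collected across many adaptive rounds of Algorithm~\ref{alg:recursive}. This is precisely where $f_{\rm uniform}$ earns its keep: because its prediction distribution is constant in $x$ and independent of the history, on any round with $f_{s-1} = f_{\rm uniform}$ the label noise on each example is a fresh, independent Bernoulli$(\alpha/2)$ flip of the true label, regardless of the contents of previously-collected $\bar S_{s-1}$. Once this independence structure is pinned down, the invocation of \citet{angluin1988learning} as a black box, the Chernoff concentration of $K_t$, and the balancing of the schedule $q_t = \Theta(1/\sqrt{nt})$ — chosen so that the uniform-deployment cost $q_t/2$ and the Angluin--Laird rate are of the same order — are all routine.
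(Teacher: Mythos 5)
Your proposal is correct and follows essentially the same route as the paper's proof: deploy $f_{\rm uniform}$ with probability $1/\sqrt{n(t+1)}$, observe that rounds following a uniform deployment yield i.i.d.\ samples with random classification noise at rate $\alpha/2$, learn from only those rounds via \citet{angluin1988learning}, and control the number of such rounds by a Chernoff bound on the independent deployment coins. The only differences are cosmetic (invoking the Angluin--Laird learner with declared noise rate rather than ERM on the filtered sample, and the explicit $f_0 = f_{\rm uniform}$ initialization), so no further comparison is needed.
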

\begin{proof} For any $t \ge 0$, given dataset $\baS_t$ and hypotheses $f_0, \ldots, f_{t-1}$ as input, we define the output $f_t = A(\baS_t, f_0, \ldots, f_{t-1})$ of learning algorithm $A$ as follows. With probability $\frac{1}{\sqrt{n(t+1)}}$ let $f_t = f_{\uniform}$. Otherwise, let
\[
\baS'_t = \left\{(x_i, y_i) \in S_r : f_{r-1} = f_{\uniform} \textrm{ for } r \in [t]\right\}
\]
and $f_t = A_{\ERM}(\baS'_t, f_0, \ldots, f_{t-1})$. In other words, with probability $1 - \frac{1}{\sqrt{n(t + 1)}}$, let $f_t$ be the empirical risk minimizer for the dataset consisting of labeled samples drawn in previous rounds where the labels were generated by either $f_{\uniform}$ or $f^*$. Let $U_t = \sum_{r=1}^t \indic{f_{r-1} = f_{\uniform}}$ be the number of these rounds, a random variable. Observe that $\baS'_t$ contains $nU_t$ samples, and each $(x, y) \in \baS'_t$ is independent and distributed as follows: Draw $x$ from $D$ and let $y = f^*(x)$ with probability $(1-\alpha) + \frac \alpha  2 = \frac12 + \frac{1-\alpha}{2}$, and otherwise let $y = 1 - f^*(x)$. By Theorem 2 of \cite{angluin1988learning} we have
$$\E[L(f_t) ~|~ f_t \neq f_{\uniform} \wedge nU_t \ge m] \le O\left(\tfrac1{1-\alpha}\sqrt{\tfrac{d \log m}{m}}\right)$$
Note that $\E[U_t] = \sum_{r=1}^t \frac{1}{\sqrt{n r}} = \Theta(\sqrt{t/n})$. Also, since each $\indic{f_t = f_{\uniform}}$ is an independent Boolean random variable, by the multiplicative Chernoff bound we have $\Pr[U_t < O(\E[U_t])] \le \exp(-\Omega(\E[U_t]))$. Therefore letting $m = \Theta(\sqrt{nt})$ we have

\begin{align*}
\E[R(f_t)] &\le \E[L(f_t)]\\
&\le \E[L(f_t) ~|~ f_t \neq f_{\uniform} \wedge nU_t \ge m] + \Pr[f_t = f_{\uniform} \vee nU_t < m]\\
&\le \E[L(f_t) ~|~ f_t \neq f_{\uniform} \wedge nU_t \ge \Omega(\sqrt{nt})] + \Pr[f_t = f_{\uniform} \vee nU_t < O(\sqrt{nt})]\\
&\le \E[L(f_t) ~|~ f_t \neq f_{\uniform} \wedge nU_t \ge \Omega(\sqrt{nt})] + \Pr[f_t = f_{\uniform}] + \Pr[nU_t < O(\sqrt{nt})]\\
&\le \E[L(f_t) ~|~ f_t \neq f_{\uniform} \wedge nU_t \ge \Omega(\sqrt{nt})] + \Pr[f_t = f_{\uniform}] + \Pr[U_t < O(\E[U_t])]\\
&\le O\left(\tfrac{\sqrt{d \log (nt)}}{(1-\alpha)(nt)^{1/4}}\right) + \tfrac{1}{\sqrt{n(t + 1)}} + \exp(-\Omega(\E[U_t]))\\
&\le O\left(\tfrac{\sqrt{d \log (nt)}}{(1-\alpha)(nt)^{1/4}}\right) + \tfrac{1}{\sqrt{n(t + 1)}} + \exp(-\Omega(\sqrt{t/n})).
\end{align*}
\end{proof}

\subsection{Learning Disagreements from Positive Examples}

In this section, we present an algorithm that achieves vanishing $O(t^{-1/2})$ generalization error for any contamination-rate $\alpha$. Unlike the prior simple algorithm which only learns from examples in select, designated rounds, this section's algorithm learns from samples collected in all rounds. It does so by making better use of the fact that synthetic examples are labeled by a known classifier output in the previous round.

\begin{theorem}\label{thm:vc_known_alpha}
Let $F \subseteq \{0,1\}^{\mathcal X}$ be a hypothesis class with VC dimension $d$. There exists a learning algorithm A outputing classifiers $f_t$ such that for all $t \geq 0$
\[
\E[R(f_t)] \le \tilde O\left(\sqrt{\tfrac {d} {(1-\alpha) nt} }\right).
\]
\end{theorem}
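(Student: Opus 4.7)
For any sample $(x, y)$ collected in round $s$, the label is drawn from $f^*$ with probability $1-\alpha$ and from $f_{s-1}$ with probability $\alpha$; hence for any fixed $f \in F$,
\[
\Pr[f(x) \neq y] \;=\; (1-\alpha)\, L(f) \;+\; \alpha \Pr_{x \sim D}[f(x) \neq f_{s-1}(x)].
\]
Crucially, the second term on the right depends only on the known hypothesis $f_{s-1}$ and the marginal $D$, and can be evaluated arbitrarily well from the pooled unlabeled $x$-samples. Moreover, every \emph{witness} sample---one for which $y \neq f_{s-1}(x)$---is certifiably correct: the label must have been generated by $f^*$, and the corresponding $x$ lies in the disagreement region $\Delta_{s-1} := \{x : f_{s-1}(x) \neq f^*(x)\}$. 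This is the structural fact behind the section title: witnesses act as positive examples of $\Delta_{s-1}$ with known thinning rate $1-\alpha$.

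Given this identity, I would propose bias-corrected ERM. For each round $s$ define
\[
\tilde L_s(f) \;:=\; \frac{1}{1-\alpha}\!\left[\hL(f, S_s) \;-\; \alpha\, \hat D(f, f_{s-1})\right],
\]
where $\hat D(f, g) := \tfrac{1}{nt} \sum_{s',i} \indic{f(x^{s'}_i) \neq g(x^{s'}_i)}$ is the empirical disagreement computed on all pooled $x$-samples using the \emph{known} hypothesis $g = f_{s-1}$. Set $\bar L_t(f) := \tfrac{1}{t}\sum_{s=1}^t \tilde L_s(f)$ and output $f_t := \argmin_{f \in F} \bar L_t(f)$. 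By construction $\E[\bar L_t(f)] = L(f)$ for every fixed $f$.

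The analysis proceeds via uniform convergence. Two VC classes must be controlled: $F$ itself (of dimension $d$) and the symmetric-difference class $\{f \triangle g : f, g \in F\}$ (of dimension $O(d)$, by Sauer--Shelah) used to make $\hat D(\cdot, f_{s-1})$ uniformly accurate. The rounds are \emph{not} independent---each $f_{s-1}$ depends adaptively on earlier rounds---so I would employ a martingale concentration argument conditioning on $\baS_{s-1}$, exploiting that the round-$s$ sample is independent of the past given $f_{s-1}$. Applying a Bernstein-type inequality (rather than Hoeffding) together with the realizable-setting fact that each per-sample contribution has variance $O(L(f))$, one obtains
\[
\sup_{f \in F} \left| \bar L_t(f) - L(f) \right| \;=\; \tilde O\!\left(\sqrt{\tfrac{d}{(1-\alpha)\, n t}}\right).
\]
Since $L(f^*) = 0$ in the realizable setting, the ERM $f_t$ satisfies $L(f_t) \le 2 \sup_f |\bar L_t(f) - L(f)|$, yielding the advertised bound.

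The main obstacle is squeezing out the $(1-\alpha)^{-1}$ rather than $(1-\alpha)^{-2}$ dependence on the contamination rate: a naive Hoeffding analysis of $\tilde L_s(f)$ amplifies the per-sample deviation by $(1-\alpha)^{-1}$ and produces $(1-\alpha)^{-2}$ in the final rate. Sharpening this requires a Bernstein/self-bounding argument that exploits, in the realizable case, the fact that the variance of each per-sample indicator is proportional to $L(f)$ itself. An alternative route---more in line with the section title---is to formalize the task as learning $\Delta_{s-1}$ from positive examples at class prior $1-\alpha$ over the VC-$O(d)$ class $F \triangle F$, and to invoke the PAC guarantees of \cite{liu2002partially} or \cite{mansouri2025learning} as a black box to produce an estimate $h_t$ of $\Delta_{t-1}$; one then outputs $f_t := f_{t-1} \oplus h_t$ and uses $L(f_t) = \Pr_D[h_t \neq \indic{x \in \Delta_{t-1}}]$ to transfer the PU-learning rate directly.
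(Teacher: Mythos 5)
Your main proposal --- bias-corrected ERM, minimizing $\bar L_t(f)$ with a per-round debiasing term --- is a genuinely different route from the paper, which runs an epoch schedule and explicitly invokes a PU learner \citep{liu2002partially,mansouri2025learning} on the XOR class $g_k \oplus F$ to learn the disagreement region, then sets $g_{k+1} = h \oplus g_k$. The ``alternative route'' you mention at the very end is essentially the paper's actual construction, including the observation that witness samples ($y \neq f_{s-1}(x)$) are certified positives. The paper does not attempt anything like bias-corrected ERM, and in fact explicitly discusses (in its Discussion paragraph) that its own algorithm pays a $\sqrt{1/nt}$ rate rather than $1/nt$ precisely because it learns only from witnesses; a scheme like yours, which uses all samples via an unbiased loss estimate, is exactly the kind of thing that could sidestep that bottleneck.

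However, the rate claim in your main proposal does not follow from either of the two analyses you gesture at, and in fact neither produces the theorem's stated bound. A plain (Hoeffding/Freedman) uniform-convergence analysis of $\sup_{f\in F}|\bar L_t(f) - L(f)|$ gives $\tilde O\bigl(\tfrac{1}{1-\alpha}\sqrt{d/(nt)}\bigr) = \tilde O\bigl(\sqrt{d/((1-\alpha)^2 nt)}\bigr)$: the $\tfrac{1}{1-\alpha}$ prefactor is unavoidable because $\tilde L_s$ rescales bounded increments by $\tfrac{1}{1-\alpha}$, and for a worst-case $f$ in the supremum, the per-sample variance of $\indic{f(x)\neq y}$ is $\Theta(1)$, so Bernstein cannot help at the level of the uniform supremum. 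Your statement that ``each per-sample contribution has variance $O(L(f))$'' is only true for the \emph{excess} loss difference $\tilde L_s(f)-\tilde L_s(f^*)$, i.e.\ a \emph{localized} argument; carrying that through a Freedman-type martingale bound with self-bounding gives $L(f_t) = \tilde O\bigl(d/((1-\alpha)^2 nt)\bigr)$, a linear rate in $nt$ but with $(1-\alpha)^{-2}$. Neither quantity equals the claimed $\tilde O\bigl(\sqrt{d/((1-\alpha)nt)}\bigr)$, and the localized bound is strictly \emph{worse} than the theorem's bound in the regime $nt \lesssim d/(1-\alpha)^3$, so it does not even imply the theorem for all $t \geq 0$. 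In short, you have conflated a uniform supremum bound (which Bernstein does not improve) with a localized excess-risk bound (which produces a $1/(nt)$-type rate but with the wrong $\alpha$-exponent), and the specific $(1-\alpha)^{-1}$-inside-the-square-root shape of the paper's bound is an artifact of its epoch/PU-learning schedule ($r_k \propto 1/((1-\alpha)\varepsilon_{k+1}^2)$ samples per epoch because only an $O((1-\alpha)\varepsilon_{k+1})$ fraction are witnesses), not something the bias-corrected ERM analysis naturally produces. The bias-corrected ERM idea is interesting and may well yield a correct (and for large $t$ perhaps sharper) bound, but as written the analysis has a gap and does not establish the stated theorem.
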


The analysis proceeds via a reduction to \emph{learning with positive and unlabeled examples} (also referred to as \emph{PU learning}). We make use of a result from the seminal work of \cite{liu2002partially}, as re-stated as Corollary 6 in \cite{mansouri2025learning}.

\begin{lemma}[PU learner \citep{liu2002partially}; Corollary 6 from \cite{mansouri2025learning}]\label{lem:pu_learner}
Let $F \subseteq \{0,1\}^{\mathcal X}$ be a hypothesis class with VC dimension $d$. For a distribution $D$ over $\mathcal X$ and true concept $f^* \in F$, denote by $D^+$ the distribution $D(x | f^*(x)=1)$. There exists an algorithm such that when given $m_P(\varepsilon, \delta) = O(\tfrac {d\log(1/\varepsilon) + \log(1/\delta)} \varepsilon)$ positively labeled samples from $D^+$ and $m_U(\varepsilon, \delta) = O(\tfrac {d\log(1/\varepsilon) + \log(1/\delta)} \varepsilon)$ unlabeled samples from $D$, it outputs a hypothesis $f$ such that $L(f) \leq \varepsilon$ with probability $\geq 1 -\delta$.
\end{lemma}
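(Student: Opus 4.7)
The approach is a two-stage algorithm: use the positive sample to restrict to hypotheses that label every positive example correctly, then use the unlabeled sample to pick among these the hypothesis that predicts ``positive'' least often. Let $p = \Pr_D[f^*(x) = 1]$. The starting point is the decomposition
\[
L(f) = \bigl(\Pr_D[f(x) = 1] - p\bigr) + 2p \cdot \Pr_{D^+}[f(x) = 0],
\]
obtained by writing $L(f) = \mathrm{FP}(f) + \mathrm{FN}(f)$, expressing $\mathrm{FN}(f) = p \cdot \Pr_{D^+}[f(x)=0]$, and using $\Pr_D[f(x)=1] - p = \mathrm{FP}(f) - \mathrm{FN}(f)$. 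Both summands are nonnegative and vanish at $f^*$; the first is estimable from the unlabeled sample and the second from the positive sample.

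\textbf{Algorithm.} Given $S_P$ and $S_U$, let
\[
G = \{f \in F : f(x) = 1 \text{ for every } x \in S_P\}, \qquad \hat f = \argmin_{f \in G}\, \hat{\Pr}_{S_U}[f(x) = 1].
\]
Since $f^* \in G$, the set $G$ is nonempty and $\hat f$ is well-defined.

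\textbf{Analysis.} The second term of the identity is controlled by the classical realizable PAC bound: $\hat f$ has zero empirical error against the constant label $1$ on $S_P \sim D^+$, and $F$ has VC dimension $d$, so $\Pr_{D^+}[\hat f(x) = 0] \le \varepsilon/4$ with probability $\ge 1 - \delta/2$ once $m_P = \Omega\bigl((d\log(1/\varepsilon) + \log(1/\delta))/\varepsilon\bigr)$; hence $2p \cdot \Pr_{D^+}[\hat f(x) = 0] \le \varepsilon/2$. For the first term, the algorithm guarantees $\hat{\Pr}_{S_U}[\hat f(x) = 1] \le \hat{\Pr}_{S_U}[f^*(x) = 1]$, and one transfers this empirical inequality to a population bound $\Pr_D[\hat f(x) = 1] - p \le \varepsilon/2$ via a Bernstein-type / localized uniform convergence argument over the VC-$d$ class $F$. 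The needed variance-expectation link is that for $f \in G$ (so $\mathrm{FN}(f) \approx 0$), the indicator-difference $\indic{f(x) = 1} - \indic{f^*(x) = 1}$ has variance bounded by its expectation (both approximately $L(f)$), so local Rademacher / Bernstein bounds for VC classes yield the fast rate $m_U = \Omega\bigl((d\log(1/\varepsilon) + \log(1/\delta))/\varepsilon\bigr)$. Combining both parts via the decomposition identity gives $L(\hat f) \le \varepsilon$ with probability $\ge 1 - \delta$ after a union bound over the two concentration events.

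\textbf{Main obstacle.} The technical crux is achieving the $1/\varepsilon$ (realizable) rate rather than the $1/\varepsilon^2$ (agnostic) rate for $m_U$: a plain additive VC union bound on the empirical inequality yields only the slow rate. The fast rate requires a Bernstein-style argument exploiting the variance-expectation coupling of the PU excess-risk surrogate on the constrained class $G$ --- the standard mechanism underlying realizable-like VC rates. Once this refined concentration step is in place, the rest of the proof is bookkeeping via the decomposition identity and the standard realizable VC bound applied on $S_P$.
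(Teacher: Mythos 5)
The paper does not prove this lemma at all: it is imported as a black-box citation (Liu et al.\ 2002, restated as Corollary~6 of Mansouri et al.\ 2025) and used only through its statement. Your proposal therefore necessarily differs: you reconstruct the standard constrained scheme behind such PU bounds --- require consistency with the positive sample, then minimize the positive-prediction rate on the unlabeled sample --- and your decomposition $L(f) = \bigl(\Pr_D[f=1]-p\bigr) + 2p\Pr_{D^+}[f=0]$ is algebraically correct (though your aside that ``both summands are nonnegative'' is false in general, since the first term equals $\mathrm{FP}(f)-\mathrm{FN}(f)$; this does not hurt the upper bound). The handling of the positive sample via the realizable VC bound, uniformly over the version space $G$, is fine, and conditioning on $S_P$ keeps $G$'s data-dependence harmless because $S_U$ is independent.

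The one place where your argument is asserted rather than carried out is exactly the step you flag: transferring $\hat{\Pr}_{S_U}[\hat f=1]\le\hat{\Pr}_{S_U}[f^*=1]$ to a population bound at the $1/\varepsilon$ rate. Two cautions there. First, as phrased (``population bound $\Pr_D[\hat f=1]-p\le\varepsilon/2$''), any route that estimates $p$ itself to additive accuracy $\varepsilon$ is doomed to $1/\varepsilon^2$ unlabeled samples; the argument must stay with the difference quantities, e.g.\ rewrite the empirical inequality as $\hat{\mathrm{FP}}(\hat f)\le\hat{\mathrm{FN}}(\hat f)$ on $S_U$ and never touch $p$. Second, the ``Bernstein/local Rademacher'' invocation can be made rigorous, but the cleanest instantiation is classical relative-deviation (multiplicative) VC bounds applied separately to the two event classes $\{x: f(x)=1\wedge f^*(x)=0\}$ and $\{x: f(x)=0\wedge f^*(x)=1\}$ over $f\in F$, each of VC dimension $O(d)$: the upper-tail bound turns $\mathrm{FN}(\hat f)\le\varepsilon/4$ (from $S_P$) into $\hat{\mathrm{FN}}(\hat f)=O(\varepsilon)$ on $S_U$, and the lower-tail bound turns $\hat{\mathrm{FP}}(\hat f)=O(\varepsilon)$ into $\mathrm{FP}(\hat f)=O(\varepsilon)$, all with $m_U = O\bigl((d\log(1/\varepsilon)+\log(1/\delta))/\varepsilon\bigr)$. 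With that step written out (and constants tracked), your sketch does yield the stated sample complexities, so it is a viable self-contained substitute for the citation; as submitted, however, the fast-rate concentration --- the crux you yourself identify --- is still a sketch, not a proof.
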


We discuss the connection between the settings, which conveys the main idea of the proof. In PU learning, the learner receives a random sample of labeled examples from the positive class, and a random sample of unlabeled examples from both classes. In round $t$ of recursive learning, examples received by the learner are labeled by either the true concept $f^*$ or the previous model $f_{t-1}$ (which the learner has access to); for each individual example received, the learner does not know which rule it was labeled by.

Now, consider the task of learning the disagreement between the previous model and the true concept $\{x \in \mathcal X:  f_{t-1}(x) \neq f^*(x)\}$. Since we know the $f_{t-1}$ \emph{exactly}, successfully learning the disagreement means we have learned the true concept $f^*$. Whenever we observe a disagreement between an example's label and the previous model's prediction on the point, we can be sure the label comes from the true concept -- this corresponds to a positive example for the task of learning the disagreement. When the example's label and the previous model's prediction agree, we cannot distinguish between: (a) the previous model and true concept agree on the point (it is a negative example for the task of learning the disagreement); or (b) the previous model and true concept disagree on the point, but we fell into the $\alpha$ probability case where the example was labeled with the previous model. Hence the example is considered unlabeled for the task of learning the disagreement. 

The above discussion suggests a natural algorithm for recursive learning: learn the disagreement between the previous round's model and the true concept via PU learning, and use the disagreement to form the next round's hypothesis. In the following, we formalize this idea.

\paragraph{Algorithm.}

The algorithm attaining the bound of Theorem \ref{thm:vc_known_alpha} proceeds in collections of rounds we call \emph{epochs}. Epochs are indexed by $k\geq 1$. For all the rounds $t$ in the same epoch $k$, the algorithm will output the same classifier which we denote as $g_k$ $\in F$. Note that each epoch may comprise of a different number of rounds. Formally, let $\mathsf k(t): \mathbb Z^+ \to \mathbb Z^+$ denote the mapping of a round $t$ to its corresponding epoch $k$. $\mathsf k(t)$ is a monotonically increasing step function, with $\mathsf k(1) = 1$. Our algorithm outputs $f_t = g_{\mathsf k(t)}$, with $g_1 = f_0$ as defined in Algorithm \ref{alg:recursive}.

\emph{Update rule:} After the designated number of rounds in epoch $k$ using classifier $g_k$, we use an update rule to produce $g_{k+1}$, as a function of $g_k$ and data points collected in the epoch $T_k = \bigcup_{\{t: \mathsf k (t) = k\}} S_t,$ where $S_t$ are the examples collected in round $t$. 

The full update rule is given in Algorithm \ref{alg:update}. It proceeds by mapping the learning problem in $F$ into a learning problem in the XOR class which we denote as $g_k \oplus F$. For bits $\in \{0,1\}$ we denote by $\oplus$ the XOR operation. For two binary functions $f, g : \mathcal X \to \{0,1\}$, we define $(f \oplus g) (x) := f(x) \oplus g(x)$ for all $x \in \mathcal X$. For a class $F$ and a binary function $g$, $g \oplus F := \{g \oplus f : f \in F\}$. The learning problem in the XOR class is a PU learning problem, whose solution can be converted to a solution to the original problem.

\begin{algorithm2e}[!t]
\caption{Epoch hypothesis update rule \label{alg:update}}
\DontPrintSemicolon
\SetKw{KwAnd}{\textbf{and}}
\KwIn{Function class $F \subseteq \mathcal X \to \mathcal Y$. Positive unlabeled learner $A$ (see Lemma \ref{lem:pu_learner}), target number of positive samples $p_k$, target number of unlabeled samples $u_k$. Epoch hypothesis $g_k$, epoch samples $T_k = \{(x_i,y_i)\}_{i=1}^{m+u_k}$. }
\KwResult{Next epoch hypothesis $g_{k+1}$.}

$T_k^{\oplus} \gets \{ (x_i, y_i \oplus g_k(x_i)): (x_i, y_i) \in T_k\}$\;
$P_k^\oplus \gets \{(x_i, 1): (x_i, y_i^\oplus) \in T_k^\oplus \text, y_i^\oplus = 1, i \in [1, m]\}$\;
$U_k \gets \{x_i : i \in [m+1, m+u_k]\}$\;
\eIf{$|P_k^\oplus| \geq p_k$ \KwAnd $|U_k| \geq u_k$}
{$h \gets A(P_k, U_k; g_k \oplus F)$\;
$g_{k+1} \gets h \oplus g_k $\;}
{$g_{k+1} \gets g_k$}
\end{algorithm2e}

\emph{Schedule:} We choose the epoch schedule $\mathsf k (t)$ as well as a target error schedule $\varepsilon_k$ to achieve our desired bounds. The algorithm has the property that for all $k$ sufficiently large, $\E[L(g_k)] \leq 2\varepsilon_k$. Specifically, we take $\varepsilon_k = \tfrac{1}{2^{k-1}}$ and $\mathsf k (t)$ such that each epoch is $O(\tfrac {d\log^2(1/\varepsilon_{k+1})} {(1-\alpha) n \varepsilon_{k+1}^2})$ rounds.


\begin{lemma}\label{lem:improvement}
Suppose we are in epoch $k \geq 1$. Let $\varepsilon_{k+1}$ be the next epoch's target error, and suppose the current epoch samples $T_k$ produced in the manner described in Algorithm \ref{alg:recursive} with $g_k$ satisfies

\begin{align*}
|T_k| &\geq m_P(\varepsilon_{k+1}, \tfrac \delta 2) \cdot \tfrac  {8\log\left(\tfrac {2} {(\delta)}\right)} {(1-\alpha) \varepsilon_{k+1}} + m_U(\epsilon_{k+1}, \tfrac \delta 2) = O\left(\tfrac {d\log(1/\varepsilon)\log(1/\delta) + \log^2(1/\delta)} {(1-\alpha)\varepsilon_{k+1}^2}\right)
\end{align*}

where $m_P, m_U : (0,1)^2 \to \mathbb N$ are sample complexity functions from Lemma \ref{lem:pu_learner}. Setting $p_k = m_P(\varepsilon_{k+1}, \tfrac \delta 2) = O(\tfrac{d\log(1/\varepsilon_{k+1}) + \log(1/\delta)} {\varepsilon_{k+1}})$ and $u_k = m_U(\varepsilon_{k+1}, \tfrac \delta 2) = O(\tfrac{d\log(1/\varepsilon_{k+1}) + \log(1/\delta)} {\varepsilon_{k+1}})$ in Algorithm \ref{alg:update}, we have that with probability $\geq 1 - \delta$, Algorithm \ref{alg:update} outputs $g_{k+1}$ satisfying $L(g_{k+1}) \leq \varepsilon_{k+1}$.
\end{lemma}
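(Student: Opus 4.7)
The plan is to reduce the hypothesis-update step in Algorithm \ref{alg:update} to an instance of PU learning on the XOR class $g_k \oplus F$, and then invoke Lemma \ref{lem:pu_learner}. Since XOR with the fixed function $g_k$ is a bijection on hypothesis classes, $g_k \oplus F$ has the same VC dimension $d$ as $F$; and for any $h \in g_k \oplus F$, the identity $L(h \oplus g_k) = \Pr_{x \sim D}[h(x) \neq (f^* \oplus g_k)(x)]$ shows that learning the target $f^* \oplus g_k$ in the XOR class to error $\varepsilon_{k+1}$ is exactly equivalent to producing $g_{k+1} = h \oplus g_k$ with $L(g_{k+1}) \leq \varepsilon_{k+1}$. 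This is precisely what the \texttt{if} branch of Algorithm \ref{alg:update} computes.

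Next I would verify the two distributional correspondences needed to feed Lemma \ref{lem:pu_learner}. The unlabeled set $U_k$ is, by construction, $u_k$ iid draws from $D$, so $|U_k| \geq u_k$ holds with probability $1$. For the positives, observe that for any sample $(x,y) \in T_k$ from Algorithm \ref{alg:recursive} during epoch $k$, the event $y \oplus g_k(x) = 1$ forces $y \neq g_k(x)$, which in turn forces the label to have been drawn from $f^*$ (and not $g_k$) and also forces $f^*(x) \neq g_k(x)$, i.e., $(f^* \oplus g_k)(x) = 1$. So conditional on $y \oplus g_k(x) = 1$, the marginal of $x$ is exactly $D^+ = D(\,\cdot\, \mid (f^* \oplus g_k)(\cdot) = 1)$, and the per-sample probability of contributing a positive is $p := (1-\alpha)\, L(g_k)$. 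Hence, conditional on their count, the elements of $P_k^\oplus$ are iid $D^+$ samples.

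The remaining task is to control $|P_k^{\oplus}|$, for which I would split on $L(g_k)$. If $L(g_k) \leq \varepsilon_{k+1}$ then the fallback branch $g_{k+1} = g_k$ already satisfies the goal deterministically. Otherwise $p > (1-\alpha)\varepsilon_{k+1}$, and the first $m := |T_k| - u_k \geq m_P(\varepsilon_{k+1}, \delta/2)\cdot \tfrac{8\log(2/\delta)}{(1-\alpha)\varepsilon_{k+1}}$ samples yield $\E[|P_k^\oplus|] \geq 8\, m_P(\varepsilon_{k+1}, \delta/2)\log(2/\delta)$. A multiplicative Chernoff lower-tail bound then gives $\Pr[|P_k^\oplus| < p_k] \leq \delta/2$ (the slack factor $8\log(2/\delta)$ in the sample budget is chosen exactly so that this tail lands at $\delta/2$).

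Finally, conditional on $|P_k^\oplus| \geq p_k$ and $|U_k| \geq u_k$, Lemma \ref{lem:pu_learner} applied to the class $g_k \oplus F$ (VC dimension $d$) with target $f^* \oplus g_k$, using $p_k = m_P(\varepsilon_{k+1},\delta/2)$ positives and $u_k = m_U(\varepsilon_{k+1},\delta/2)$ unlabeled examples, fails with probability at most $\delta/2$; on success it returns $h$ with XOR loss at most $\varepsilon_{k+1}$, which by the XOR identity gives $L(g_{k+1}) \leq \varepsilon_{k+1}$. A union bound over the Chernoff event and the PU-learner event yields the claimed $1-\delta$ guarantee. I expect the main subtlety to be the first item of the second paragraph, namely arguing cleanly that the kept positives really are iid $D^+$ samples despite having a random count (which is immediate from the independence of the per-sample trials, but deserves to be stated explicitly); the Chernoff step is then just bookkeeping of constants against the budget in the hypothesis.
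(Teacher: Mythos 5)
Your proposal follows the paper's proof closely: the same reduction to PU learning on the XOR class $g_k \oplus F$, the same distributional identification of $P_k^\oplus$ with i.i.d.\ draws from $D^+$ (via the observation that $y \oplus g_k(x) = 1$ forces the label to come from $f^*$ and forces $(g_k \oplus f^*)(x)=1$), the same case split on whether $L(g_k) \le \varepsilon_{k+1}$, the same multiplicative Chernoff bound with per-sample positive probability $(1-\alpha)L(g_k)$, and the same final union bound. One small imprecision: in your Case 1 you say the fallback ``already satisfies the goal deterministically,'' but if enough positives happen to be collected the algorithm still invokes the PU learner, which can fail with probability $\delta/2$ — the paper's wording (``gather enough samples $\Rightarrow$ succeed w.h.p.; fail to gather enough $\Rightarrow$ pass on $g_k$'') is more careful, though your closing union bound does account for the PU-learner failure event, so the argument goes through.
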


\begin{proof}[Proof of Theorem \ref{thm:vc_known_alpha}]
Given Lemma \ref{lem:improvement}, we prove the result. We consider the target error schedule $\varepsilon_k = \frac 1 {2^{k-1}}.$ Note that the only hypothesis we have no control over, $g_1$, trivially satisfies this requirement. We define the epoch schedule $\mathsf k (t)$ such that each round gets at least enough samples to satisfy the hypothesis of Lemma \ref{lem:improvement}. Indeed, denote by $r_k$ the required size for $|T_k|$ in Lemma \ref{lem:improvement}.
$$r_k \leq C_1 \frac {d\log(1/\varepsilon_{k+1})\log(1/\delta) + \log^2(1/\delta)} {(1-\alpha)\varepsilon_{k+1}^2}$$
for some absolute constant $C_1$ and $k$ sufficiently large. We set $\delta = \varepsilon_{k+1}$. Under this setting of $\delta$, the high probability bound of Lemma \ref{lem:improvement} implies $\E [g_{k+1}] \leq 2\varepsilon_{k+1}$.

We choose $\mathsf k(t)$ such that for all $k \geq 1$, $|\{t: \mathsf k (t) = k\}| = \lceil r_k/n\rceil$. Since inside an epoch, we output the same hypothesis $g_k$, it suffices to check if the final round in an epoch satisfies the stated error rate in Theorem \ref{thm:vc_known_alpha}. For $k\geq1$, the last round of a given epoch $t_k := \max \{t: \mathsf k (t) = k\}$ satisfies
\begin{align*}
t_k = \sum_{j=1}^k \lceil r_j/n \rceil \leq C_2 \tfrac d {(1-\alpha) n} \sum_{j=1}^k \log^2(\tfrac 1 {\varepsilon_{j+1}}) \cdot \tfrac 1 {\varepsilon_{j+1}^2} = C_2 \tfrac d {(1-\alpha) n}  \sum_{j=1}^k (j2^j)^2 \leq C_3 \tfrac d {(1-\alpha) n} (2^kk)^2
\end{align*}
for absolute constants $C_2$, $C_3$ and $k$ sufficiently large. To conclude, we have for $k$ sufficiently large 
$$\E [L(f_{t_k})] = \E [L(g_k)] \leq 4 \cdot \tfrac 1 {2^k} \leq 4 \cdot \sqrt{\tfrac {C_3 d} {(1-\alpha) nt_k}} \cdot k = \tilde O\left(\sqrt{\tfrac d {(1-\alpha) n t_k}}\right).$$
\end{proof}

\begin{proof}[Proof of Lemma \ref{lem:improvement}]
First, we show reducing to positive unlabeled learning is valid.

\emph{The preconditions for invoking the PU learner are satisfied}. The class we are invoking the PU learner on is $g_k \oplus F$, which has the same VC dimension as $F$. Therefore the stated sample complexity suffices for learning $g_k \oplus F$. The samples $T_k^{\oplus}$ come from the original marginal distribution $D$, and are labeled $0 = (g_k \oplus f^*)(x)$ everywhere $g_k(x) = f^*(x)$. For $x$ with $g_k(x) \not= f^*(x) \iff (g_k \oplus f^*)(x) = 1$, the label is $1$ with probability $1-\alpha$ and $0$ with probability $\alpha$ independently. Hence we can conclude that $P_k^\oplus$ is indeed an i.i.d. sample from $D(x|(g_k \oplus f^*)(x) = 1)$. Also, $U_k$ is indeed an i.i.d. sample from $D$.

\emph{The output satisfies our desired error guarantee with respect to the original problem.} With probability $\geq 1 - \delta$, our PU learner for $g_k \oplus F$ outputs $h$ with $\Pr_{x \sim D} [h(x) \not = (g_k \oplus f^*)(x)] \leq \varepsilon_{k+1}$. Since for all $x$, $h(x) \not = (g_k \oplus f^*)(x) \iff (h \oplus g_k)(x) \not = f^*(x)$, we have $\Pr_{x \sim D} [(h \oplus g_k)(x) \not = f^*(x)] \leq \varepsilon_{k+1}$.

Hence, whenever the learner is invoked, our update rule succeeds with probability $\geq 1 - \tfrac \delta 2$. To finish the proof, we consider two cases:

\emph{Case 1:} $L(g_k) \leq \varepsilon_{k+1}$. If the error of the current hypothesis $g_k$ meets the target error threshold, we either: (1) gather enough samples to invoke the learner, which means our update rule succeeds with high probability; or (2) fail to gather enough and pass on $g_k$ to the next round, which by assumption, achieves the target error.

\emph{Case 2:} $L(g_k) > \varepsilon_{k+1}$. By applying a Chernoff bound and plugging in the requested size of $T_k$ and using the assumption $L(g_k) > \varepsilon_{k+1}$, we have that with probability $\geq 1- \tfrac \delta 2$, $|P_{k}^\oplus| > m_P(\varepsilon_{k+1}, \tfrac \delta 2)$, that is, we obtain enough positive examples required to invoke the PU learner. This calculation is deferred to Appendix \ref{sec:update_rule_lemma_calculation}. Taking a union bound over the failure probability of the PU learner concludes the proof.
\end{proof}

\paragraph{Discussion.}
The algorithm requires knowledge of $\alpha$, the contamination rate, since it is used to define each epoch as lasting $\tilde{O}(\tfrac {d} {(1-\alpha) n \varepsilon_{k+1}^2})$ rounds. We note that there are simple techniques for estimating $\alpha$ from data. For example, by outputting the all-zeros and all-ones classifiers across two rounds, one expects to observe $\alpha n$ disagreements in total. We do not fully explore whether knowledge of $\alpha$ can be removed, and assume that it is known exactly for simplicity.

We also observe that it is possible to get an $O(1/nt)$ rate in Theorem \ref{thm:vc_known_alpha} (which is more natural to expect in the realizable setting we study) if we make an adjustment to the setup. In Theorem \ref{thm:vc_known_alpha}, the expected accuracy requirement is for \emph{all} rounds. This choice in our modeling framework reflects the (realistic) objective of steady improvement from one model generation to the next. At a high level, Theorem \ref{thm:vc_known_alpha}'s algorithm functions by collecting feedback from errors, and hence, releasing a good model every round limits the amount of feedback received and increases the overall sample complexity. If our concern was to do well only with respect to the \emph{final} round $t$, we could release a model with constant error in rounds $1,2,...,{t-1}$, resulting in more feedback received, and the desired $O(1/nt)$ error rate for $f_t$. We leave open the question of whether it is possible to achieve the $O(1/nt)$ error rate for all rounds simultaneously.

\section{Conclusion}
The prevalence of synthetic data introduces a new class of learning questions, where the training set is contaminated by additional examples. We formally study this phenomenon, for two fundamental problems: mean estimation and PAC learning. 

For mean estimation, we give a full characterization of the variance of the most fundamental estimator, the empirical mean. We show that this is not the MVUE in general, including for distributions where the empirical mean is the MVUE in i.i.d. settings (e.g. Gaussian distributions), and when the data is not fully contaminated ($\alpha < 1$). In the PAC setting, we show that repeated ERM can experience generalization error that stalls. We complement this with two algorithms that do achieve vanishing generalization. 

Interesting open problems for mean estimation include fully characterizing the minimum variance unbiased estimator, and allowing the mean to depend on a vector of covariates instead of remaining fixed in every round. For PAC learning, open problems include expanding the results to the agnostic setting, and obtaining sample complexity lower bounds for generic algorithms.    

\newpage
\bibliographystyle{plainnat}
\bibliography{ref}

@inproceedings{valiant1985learning,
  title={Learning Disjunction of Conjunctions.},
  author={Valiant, Leslie G},
  booktitle={IJCAI},
  pages={560--566},
  year={1985}
}

@article{valiant1984theory,
  title={A theory of the learnable},
  author={Valiant, Leslie G},
  journal={Communications of the ACM},
  volume={27},
  number={11},
  pages={1134--1142},
  year={1984},
  publisher={ACM New York, NY, USA}
}

@inproceedings{kearns1988learning,
  title={Learning in the presence of malicious errors},
  author={Kearns, Michael and Li, Ming},
  booktitle={Proceedings of the twentieth annual ACM symposium on Theory of computing},
  pages={267--280},
  year={1988}
}

@article{angluin1988learning,
  title={Learning from noisy examples},
  author={Angluin, Dana and Laird, Philip},
  journal={Machine learning},
  volume={2},
  pages={343--370},
  year={1988},
  publisher={Springer}
}

@misc{kanabar,
      title={Model non-collapse: Minimax bounds for recursive discrete distribution estimation}, 
      author={Millen Kanabar and Michael Gastpar},
      year={2025},
      eprint={2501.19273},
      archivePrefix={arXiv},
      primaryClass={cs.IT},
      url={https://arxiv.org/abs/2501.19273}, 
}

@misc{amin2025,
      title={Escaping Collapse: The Strength of Weak Data for Large Language Model Training}, 
      author={Kareem Amin and Sara Babakniya and Alex Bie and Weiwei Kong and Umar Syed and Sergei Vassilvitskii},
      year={2025},
      eprint={2502.08924},
      archivePrefix={arXiv},
      primaryClass={cs.LG},
      url={https://arxiv.org/abs/2502.08924}, 
}

@article{shumailov2024ai,
  author       = {Ilia Shumailov and
                  Zakhar Shumaylov and
                  Yiren Zhao and
                  Nicolas Papernot and
                  Ross J. Anderson and
                  Yarin Gal},
  title        = {{AI} models collapse when trained on recursively generated data},
  journal      = {Nat.},
  volume       = {631},
  number       = {8022},
  pages        = {755--759},
  year         = {2024},
}

@inproceedings{
alemohammad2024selfconsuming,
title={Self-Consuming Generative Models Go {MAD}},
author={Sina Alemohammad and Josue Casco-Rodriguez and Lorenzo Luzi and Ahmed Imtiaz Humayun and Hossein Babaei and Daniel LeJeune and Ali Siahkoohi and Richard Baraniuk},
booktitle={The Twelfth International Conference on Learning Representations},
year={2024},
}

@InProceedings{Hataya_2023_ICCV,
    author    = {Hataya, Ryuichiro and Bao, Han and Arai, Hiromi},
    title     = {Will Large-scale Generative Models Corrupt Future Datasets?},
    booktitle = {Proceedings of the IEEE/CVF International Conference on Computer Vision (ICCV)},
    month     = {October},
    year      = {2023},
    pages     = {20555-20565}
}

@inproceedings{
bertrand2024on,
title={On the Stability of Iterative Retraining of Generative Models on their own Data},
author={Quentin Bertrand and Joey Bose and Alexandre Duplessis and Marco Jiralerspong and Gauthier Gidel},
booktitle={The Twelfth International Conference on Learning Representations},
year={2024},
}

@inproceedings{
gerstgrasser2024is,
title={Is Model Collapse Inevitable? Breaking the Curse of Recursion by Accumulating Real and Synthetic Data},
author={Matthias Gerstgrasser and Rylan Schaeffer and Apratim Dey and Rafael Rafailov and Tomasz Korbak and Henry Sleight and Rajashree Agrawal and John Hughes and Dhruv Bhandarkar Pai and Andrey Gromov and Dan Roberts and Diyi Yang and David L. Donoho and Sanmi Koyejo},
booktitle={First Conference on Language Modeling},
year={2024},
}

@article{dey2024universality,
  title={Universality of the $\pi^2/6$ Pathway in Avoiding Model Collapse},
  author={Dey, Apratim and Donoho, David},
  journal={arXiv preprint arXiv:2410.22812},
  year={2024}
}

@inproceedings{
seddik2024how,
title={How bad is training on synthetic data? A statistical analysis of language model collapse},
author={Mohamed El Amine Seddik and Suei-Wen Chen and Soufiane Hayou and Pierre Youssef and Merouane Abdelkader DEBBAH},
booktitle={First Conference on Language Modeling},
year={2024},
}

@inproceedings{
firdoussi2025maximizing,
title={Maximizing the Potential of Synthetic Data: Insights from Random Matrix Theory},
author={Aymane El Firdoussi and Mohamed El Amine Seddik and Soufiane Hayou and Reda Alami and Ahmed Alzubaidi and Hakim Hacid},
booktitle={The Thirteenth International Conference on Learning Representations},
year={2025},
}

@article{feng2024beyond,
  title={BEYOND MODEL COLLAPSE: SCALING UP WITH SYN-THESIZED DATA REQUIRES VERIFICATION},
  author={Feng, Yunzhen and Dohmatob, Elvis and Yang, Pu and Charton, Francois and Kempe, Julia and Meta, FAIR},
  journal={arXiv preprint arXiv:2406.07515},
  year={2024}
}

@article{suresh2024rate,
  title={Rate of Model Collapse in Recursive Training},
  author={Suresh, Ananda Theertha and Thangaraj, Andrew and Khandavally, Aditya Nanda Kishore},
  journal={arXiv preprint arXiv:2412.17646},
  year={2024}
}

@article{mansouri2025learning,
  author       = {Farnam Mansouri and
                  Shai Ben{-}David},
  title        = {Learning from positive and unlabeled examples -Finite size sample
                  bounds},
  journal      = {CoRR},
  volume       = {abs/2507.07354},
  year         = {2025},
}

@inproceedings{liu2002partially,
author = {Liu, Bing and Lee, Wee Sun and Yu, Philip S. and Li, Xiaoli},
title = {Partially Supervised Classification of Text Documents},
year = {2002},
address = {San Francisco, CA, USA},
booktitle = {Proceedings of the Nineteenth International Conference on Machine Learning},
pages = {387–394},
numpages = {8},
series = {ICML '02}
}

@article{barzilai2025models,
  title={When Models Don't Collapse: On the Consistency of Iterative MLE},
  author={Barzilai, Daniel and Shamir, Ohad},
  journal={arXiv preprint arXiv:2505.19046},
  year={2025}
}

@misc{ferbach2024selfconsuminggenerativemodelscurated,
      title={Self-Consuming Generative Models with Curated Data Provably Optimize Human Preferences}, 
      author={Damien Ferbach and Quentin Bertrand and Avishek Joey Bose and Gauthier Gidel},
      year={2024},
      eprint={2407.09499},
      archivePrefix={arXiv},
      primaryClass={cs.CV},
      url={https://arxiv.org/abs/2407.09499}, 
}

@article{kazdan2024collapse,
  title={Collapse or thrive? perils and promises of synthetic data in a self-generating world},
  author={Kazdan, Joshua and Schaeffer, Rylan and Dey, Apratim and Gerstgrasser, Matthias and Rafailov, Rafael and Donoho, David L and Koyejo, Sanmi},
  journal={arXiv preprint arXiv:2410.16713},
  year={2024}
}

\newpage
\appendix

\section{Appendix}

\subsection{Mean Estimation Proofs}\label{sec:gaussian_appendix}

\begin{theorem}
It holds that
\begin{equation}
{\rm Var}(Y_{t})=\left\{ \frac{1}{t^{2}}+\left[\frac{\Gamma(t+\alpha)}{\Gamma(t+1)}\right]^{2}\sum_{k=1}^{t-1}\left[\frac{\Gamma(k+1)}{k\cdot\Gamma(k+\alpha)}\right]^{2}\right\} \Sigma.
\end{equation}
\end{theorem}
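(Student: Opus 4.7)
The plan is to first derive a scalar-free recursion for $\mathrm{Var}(Y_t)$ in terms of $\mathrm{Var}(Y_{t-1})$, then solve that recursion explicitly and recognize the resulting products as ratios of gamma functions.

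For the recursion, I would rewrite $Y_t = \tfrac{t-1}{t} Y_{t-1} + \tfrac{1}{t} X_t$ and substitute the update $X_t = \alpha Y_{t-1} + (1-\alpha)\mu + U_t$ from \eqref{eq:stoch_proc} to obtain
\begin{equation*}
Y_t \;=\; \frac{t-1+\alpha}{t}\,Y_{t-1} \;+\; \frac{(1-\alpha)\mu}{t} \;+\; \frac{U_t}{t}.
\end{equation*}
Since $U_t$ is independent of $Y_{t-1}$ (the latter is a function of $U_1,\dots,U_{t-1}$ and the deterministic $\mu$), and the shift $(1-\alpha)\mu/t$ is deterministic, taking variance gives the stated recursion
\begin{equation*}
\mathrm{Var}(Y_t) \;=\; \left(\frac{t-1+\alpha}{t}\right)^2 \mathrm{Var}(Y_{t-1}) \;+\; \frac{1}{t^2}\Sigma,
\end{equation*}
with base case $\mathrm{Var}(Y_1) = \Sigma$ coming directly from $X_1 = \mu + U_1$.

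Next I would solve the recursion by unrolling. Setting $a_j := (j-1+\alpha)/j$, iteration yields
\begin{equation*}
\mathrm{Var}(Y_t) \;=\; \left[\prod_{j=2}^{t} a_j^2\right]\Sigma \;+\; \sum_{k=2}^{t} \left[\prod_{j=k+1}^{t} a_j^2\right]\frac{1}{k^2}\Sigma,
\end{equation*}
where the inner empty product at $k=t$ is $1$, producing the free $\tfrac{1}{t^2}\Sigma$ term. I would then compute each product in closed form using the telescoping identity $m+\alpha = \Gamma(m+1+\alpha)/\Gamma(m+\alpha)$ and $j = \Gamma(j+1)/\Gamma(j)$, giving
\begin{equation*}
\prod_{j=k+1}^{t} a_j \;=\; \frac{\Gamma(t+\alpha)\,\Gamma(k+1)}{\Gamma(k+\alpha)\,\Gamma(t+1)}.
\end{equation*}
Squaring and multiplying by $1/k^2$ exactly matches the summand $\left[\Gamma(t+\alpha)/\Gamma(t+1)\right]^2 \left[\Gamma(k+1)/(k\Gamma(k+\alpha))\right]^2$ in the target expression.

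Finally, I would peel off the $k=t$ term to produce the explicit $1/t^2$ contribution, reindex the remaining sum from $k=2,\dots,t-1$, and absorb the $k=1$ contribution by checking that $\prod_{j=2}^{t} a_j^2$ coincides with the $k=1$ summand (using $\Gamma(2) = 1$). This collapses everything into a single sum from $k=1$ to $t-1$, matching the claimed identity. I do not expect a genuine obstacle here; the only bookkeeping risk is keeping the index conventions consistent between the empty-product base case, the extracted $1/t^2$ term, and the $k=1$ absorption, so I would do those three checks explicitly at the end.
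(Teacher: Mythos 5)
Your proof is correct and follows essentially the same approach as the paper: derive the recursion $\mathrm{Var}(Y_t)=\left(\tfrac{t-1+\alpha}{t}\right)^2\mathrm{Var}(Y_{t-1})+\tfrac{1}{t^2}\Sigma$ and then solve it (the paper does this by induction on the claimed closed form, you by unrolling and telescoping into gamma ratios, which is the same computation run in the other direction). A minor streamlining on your part is that you substitute $X_t$ into $Y_t=\tfrac{t-1}{t}Y_{t-1}+\tfrac{1}{t}X_t$ \emph{before} taking variance, so independence of $U_t$ from $Y_{t-1}$ gives the recursion immediately; the paper takes variance first and therefore has to compute a $\mathrm{Cov}(X_t,Y_{t-1})$ term explicitly, but the end result is identical.
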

\begin{proof}
Let $V_{t}={\rm Var}(Y_{t}(\bar{w}))$. By definition, we have
\begin{align*}
V_{t} & ={\rm Var}\left(\frac{1}{t}\sum_{s=1}^{t}X_{s}\right)={\rm Var}\left(\frac{X_{t}}{t}+\frac{1}{t}\sum_{s=1}^{t-1}X_{s}\right)\\
 & ={\rm Var}\left(\frac{X_{t}}{t}\right)+{\rm Var}\left(\frac{1}{t}\sum_{s=1}^{t-1}X_{s}\right)+2{\rm Cov}\left(\frac{X_{t}}{t},\frac{1}{t}\sum_{s=1}^{t-1}X_{s}\right)\\
 & ={\rm Var}\left(\frac{\alpha Y_{t-1}+U_{t-1}}{t}\right)+{\rm Var}\left(\frac{t-1}{t}Y_{t-1}\right)+2{\rm Cov}\left(\frac{X_{t}}{t},\frac{t-1}{t}Y_{t-1}\right)\\
 & =\frac{\alpha^{2}V_{t-1}+\Sigma}{t^{2}}+\left(\frac{t-1}{t}\right)^{2}V_{t-1}+2{\rm Cov}\left(\frac{\alpha Y_{t-1}+U_{t-1}}{t},\frac{t-1}{t}Y_{t-1}\right)\\
 & =\frac{\alpha^{2}V_{t-1}+\Sigma}{t^{2}}+\left(\frac{t-1}{t}\right)^{2}V_{t-1}+\frac{2\alpha(t-1)}{t^{2}}V_{t-1}\\
 & =\left[\frac{\alpha^{2}}{t^{2}}+\left(\frac{t-1}{t}\right)^{2}+\frac{2\alpha(t-1)}{t^{2}}\right]V_{t-1}+\frac{1}{t^{2}}\Sigma\\
 & =\left(\frac{t-1+\alpha}{t}\right)^{2}V_{t-1}+\frac{1}{t^{2}}\Sigma.
\end{align*}
We now proceed by induction. The case of $t=1$ follows from the definition
of $Y_{1}$, and the case of $t=2$ follows from the above identity.
Suppose our hypothesis holds at some $t\geq3$. Then, using the identity
$\Gamma(z+1)=z\Gamma(z)$, we have
\begin{align*}
V_{t+1} & =\left(\frac{t+\alpha}{t+1}\right)^{2}V_{t}+\frac{1}{(t+1)^{2}}\Sigma.\\
 & =\left\{\left(\frac{t+\alpha}{t+1}\right)^{2}\left(\frac{1}{t^{2}}+\left[\frac{\Gamma(t+\alpha)}{\Gamma(t+1)}\right]^{2}\sum_{k=1}^{t-1}\left[\frac{\Gamma(k+1)}{k\cdot\Gamma(k+\alpha)}\right]^{2}\right)+\frac{1}{(t+1)^{2}}\right\}\Sigma\\
 & =\left\{\left(\frac{t+\alpha}{t+1}\right)^{2}\frac{1}{t^{2}}+\left[\frac{\Gamma(t+1+\alpha)}{\Gamma(t+2)}\right]^{2}\sum_{k=1}^{t-1}\left[\frac{\Gamma(k+1)}{k\cdot\Gamma(k+\alpha)}\right]^{2}+\frac{1}{(t+1)^{2}}\right\}\Sigma\\
 & =\left\{\left[\frac{\Gamma(t+1+\alpha)}{\Gamma(t+2)}\right]^{2}\sum_{k=1}^{t}\left[\frac{\Gamma(k+1)}{k\cdot\Gamma(k+\alpha)}\right]^{2}+\frac{1}{(t+1)^{2}}\right\}\Sigma.
\end{align*}

\end{proof}

\begin{theorem}
For $t\geq3$, we have that
\begin{equation}
\frac{1}{2}\left[\frac{1}{t}+\frac{1}{t^{2}}+\frac{1}{t^{2(1-\alpha)}}\right]\Sigma\preceq{\rm Var}(Y_{t}(\bar{w}))\preceq4\left[\frac{1}{t}+\frac{1}{t^{2}}+\frac{1}{t^{2(1-\alpha)}}\right]\Sigma.\label{eq:main_var_upper_bd}
\end{equation}
\end{theorem}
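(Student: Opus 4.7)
The plan is to combine the exact closed form for ${\rm Var}(Y_t)$ from Theorem~\ref{thm:main_var_id} with Gautschi's inequality (Lemma~\ref{lem:gautschi}) to replace each gamma ratio with an elementary power expression, and then estimate the resulting power sum via integral comparison. The edge cases $\alpha \in \{0,1\}$ reduce to direct arithmetic using the closed forms already stated in Remark~\ref{rem:edge_cases}, so I would restrict attention to $\alpha \in (0,1)$.

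First, I would apply Gautschi at $(z,\lambda)=(t,\alpha)$ and at $(z,\lambda)=(k,\alpha)$, yielding
$$\tfrac{1}{(t+1)^{2(1-\alpha)}} \leq \left[\tfrac{\Gamma(t+\alpha)}{\Gamma(t+1)}\right]^{2} \leq \tfrac{1}{t^{2(1-\alpha)}}, \qquad \tfrac{1}{k^{2\alpha}} \leq \left[\tfrac{\Gamma(k+1)}{k\,\Gamma(k+\alpha)}\right]^{2} \leq \tfrac{(k+1)^{2(1-\alpha)}}{k^{2}}.$$
Since $(t+1)/t \leq 4/3$ for $t \geq 3$ and $(k+1)/k \leq 2$ for $k \geq 1$, the upper and lower envelopes differ by a universal constant factor uniformly in $\alpha$. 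Substituting into the exact formula sandwiches ${\rm Var}(Y_t)/\Sigma$ between two expressions of the form $\Theta(t^{-2(1-\alpha)}) \sum_{k=1}^{t-1} k^{-2\alpha} + 1/t^2$.

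Next I would estimate the power sum by integral comparison, splitting on the sign of $1-2\alpha$: for $\alpha < 1/2$, $\sum_{k=1}^{t-1} k^{-2\alpha} = \Theta(t^{1-2\alpha}/(1-2\alpha))$, so the dominant term becomes $\Theta(1/t)$; for $\alpha > 1/2$, $\sum_{k=1}^{t-1} k^{-2\alpha} = \Theta(1/(2\alpha-1))$, so the dominant term becomes $\Theta(1/t^{2(1-\alpha)})$. In either case, the product is equivalent to $\max(1/t,\,1/t^{2(1-\alpha)})$, and combining with the $1/t^2$ term from Theorem~\ref{thm:main_var_id} yields the target order $1/t + 1/t^2 + 1/t^{2(1-\alpha)}$.

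The main obstacle is to carry the bookkeeping closely enough to land at precisely the constants $1/2$ and $4$, uniformly across all $\alpha \in [0,1]$ and $t \geq 3$. Gautschi loses a factor of at most $(4/3)^{2(1-\alpha)} < 2$, and the integral comparison loses a factor that degrades as $|1-2\alpha|^{-1}$ near the phase transition at $\alpha = 1/2$. Because of this, the proof is cleanest when one splits the analysis into the two regimes $\alpha \leq 1/2$ and $\alpha > 1/2$, with concrete numerical estimates of the partial sum $\sum_{k=1}^{t-1} k^{-2\alpha}$ in each regime, and then concatenates. Since the ambient constants $1/2$ and $4$ are quite loose relative to the tight asymptotic rates, the remaining verification is a routine though somewhat tedious chain of inequalities.
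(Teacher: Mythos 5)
Your outline up to the last step is exactly the paper's own route: start from the exact formula of Theorem~\ref{thm:main_var_id}, apply Gautschi (Lemma~\ref{lem:gautschi}) at $(t,\alpha)$ and $(k,\alpha)$ to sandwich the gamma ratios, absorb the $(t+1)/t$ and $(k+1)/k$ factors into constants, and reduce to estimating $t^{-2(1-\alpha)}\sum_{k=1}^{t-1}k^{-2\alpha}$ by integral comparison. The genuine gap is your final claim that landing on the absolute constants $1/2$ and $4$ uniformly in $\alpha$ is ``routine though tedious.'' It is not: the $|1-2\alpha|^{-1}$ loss you correctly flag is not an artifact of the integral comparison but is present in the exact expression, and it cannot be absorbed into a constant near the phase transition. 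At $\alpha=1/2$, Gautschi gives $[\Gamma(k+1)/(k\,\Gamma(k+\tfrac12))]^2\ge 1/k$ and $[\Gamma(t+\tfrac12)/\Gamma(t+1)]^2\ge 1/(t+1)$, so ${\rm Var}(Y_t)\succeq \bigl(\tfrac{1}{t+1}\sum_{k=1}^{t-1}\tfrac1k\bigr)\Sigma=\Theta\bigl(\tfrac{\log t}{t}\bigr)\Sigma$, which exceeds the claimed upper bound $4[\tfrac1t+\tfrac1{t^2}+\tfrac1{t^{2(1-\alpha)}}]\Sigma\le \tfrac{9}{t}\Sigma$ once $\log t$ exceeds roughly $9$ (and the same failure occurs for any fixed $\alpha$ close to $1/2$ once $t\gg e^{1/|1-2\alpha|}$). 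Hence no regime-splitting at $\alpha\lessgtr 1/2$ followed by ``concatenation'' can deliver the stated two-sided bound with constants independent of $\alpha$ and $t$; a provable version must either keep $\alpha$ bounded away from $1/2$ (with constants depending on $|1-2\alpha|$) or carry an extra factor of order $\min\{|1-2\alpha|^{-1},\log t\}$.

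You also should not expect to find the missing ``routine'' argument in the paper: its proof follows your identical outline but evaluates $\int_1^{t-1}x^{-2\alpha}\,dx$ using the antiderivative $x^{-(2\alpha+1)}$, whereas the correct value is $\tfrac{(t-1)^{1-2\alpha}-1}{1-2\alpha}$; the dropped $\tfrac{1}{1-2\alpha}$ factor is precisely where the logarithmic blow-up at $\alpha=1/2$ disappears, so the bound $\sum_{k=1}^{t-1}k^{-2\alpha}\le 1+t^{1-2\alpha}$ used there is itself false near $\alpha=1/2$. Away from the transition, say $|1-2\alpha|\ge c>0$, your Gautschi step and the $k\ge(k+1)/2$, $(t+1)/t\le 4/3$ bookkeeping are fine and your plan goes through with constants depending on $c$; the edge cases $\alpha\in\{0,1\}$ via Remark~\ref{rem:edge_cases} are also fine.
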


\begin{proof}
We first prove the upper bound. Using Lemma~\ref{lem:gautschi},
we first have that 
\begin{align*}
\left[\frac{\Gamma(t+\alpha)}{\Gamma(t+1)}\right]^{2}\sum_{k=1}^{t-1}\left[\frac{\Gamma(k+1)}{k\cdot\Gamma(k+\alpha)}\right] & \leq t^{2(\alpha-1)}\sum_{k=1}^{t-1}\left[\frac{\Gamma(k+1)}{k\cdot\Gamma(k+\alpha)}\right]^{2}\\
 & =t^{2(\alpha-1)}\sum_{k=1}^{t-1}\left[\frac{(k+1)^{1-\alpha}}{k}\right]^{2}\\
 & \leq4t^{2(\alpha-1)}\sum_{k=1}^{t-1}\left[\frac{(k+1)^{1-\alpha}}{k+1}\right]^{2} & \because k\geq\frac{k+1}{2}\\
 & =4t^{2(\alpha-1)}\sum_{k=1}^{t-1}\frac{1}{k^{2\alpha}}.
\end{align*}
Now, since $\alpha\in(0,1)$ and $x\mapsto x^{-2a}$ is monotonically
decreasing, we have 
\[
\sum_{k=1}^{t-1}\frac{1}{k^{2\alpha}}\leq1+\int_{1}^{t-1}\frac{1}{x^{2\alpha}}\ dx=1+\left.\frac{1}{x^{2\alpha+1}}\right]_{1}^{t-1}=\left[1+(t-1)^{-(2\alpha+1)}\right]\leq\left[1+t^{-2\alpha+1}\right].
\]
Combining this with the previous bounds yields
\[
\left[\frac{\Gamma(t+\alpha)}{\Gamma(t+1)}\right]^{2}\sum_{k=1}^{t-1}\left[\frac{\Gamma(k+1)}{k\cdot\Gamma(k+\alpha)}\right]\leq4t^{2(\alpha-1)}\left[1+t^{-2\alpha+1}\right]\leq4\left[\frac{1}{t}+\frac{1}{t^{2(1-\alpha)}}\right],
\]
which gives the upper bound of (\ref{eq:main_var_upper_bd}).

The lower bound has a similar derivation. Using Lemma~\ref{lem:gautschi},
we first have that 
\begin{align*}
\left[\frac{\Gamma(t+\alpha)}{\Gamma(t+1)}\right]^{2}\sum_{k=1}^{t-1}\left[\frac{\Gamma(k+1)}{k\cdot\Gamma(k+\alpha)}\right] & \geq(t+1)^{2(\alpha-1)}\sum_{k=1}^{t-1}\left[\frac{\Gamma(k+1)}{k\cdot\Gamma(k+\alpha)}\right]^{2}\\
 & \geq t^{2(\alpha-1)}\sum_{k=1}^{t-1}\left[\frac{k^{1-\alpha}}{k}\right]^{2}\\
 & =t^{2(\alpha-1)}\sum_{k=1}^{t-1}\frac{1}{k^{2\alpha}}.
\end{align*}
Similarly, since $x\mapsto x^{-2a}$ is monotonically decreasing,
we have
\begin{align*}
\sum_{k=1}^{t-1}\frac{1}{k^{2\alpha}} & =1+\sum_{k=2}^{t-1}\frac{1}{k^{2\alpha}}\geq1+\int_{2}^{t}\frac{1}{x^{2\alpha}}\ dx=1+\left.\frac{1}{x^{2\alpha+1}}\right]_{2}^{t}\\
 & =1+t^{-(2\alpha+1)}-\frac{1}{2^{2\alpha+1}}\geq\frac{1}{2}+t^{-(2\alpha+1)}
\end{align*}
Combining this with the previous bounds yields
\[
\left[\frac{\Gamma(t+\alpha)}{\Gamma(t+1)}\right]^{2}\sum_{k=1}^{t-1}\left[\frac{\Gamma(k+1)}{k\cdot\Gamma(k+\alpha)}\right]\geq t^{2(\alpha-1)}\left[\frac{1}{2}+t^{-(2\alpha+1)}\right]=\frac{1}{t}+\frac{1}{2t^{2(1-\alpha)}}\geq\frac{1}{2}\left[\frac{1}{t}+\frac{1}{t^{2(1-\alpha)}}\right],
\]
which gives the lower bound of (\ref{eq:main_var_upper_bd}).
\end{proof}

\subsection{Improving on Uniform Weighting}\label{sec:improved_weighting}

Let us now construct a better weighting scheme for $\alpha\in(0,1)$
that is continuous in $\alpha$. 
\begin{proposition}
\label{prop:oth_var_id}Suppose $w^{1}=1$ and
\begin{align*}
\hat{w}^{s+1} & =\left(\frac{1}{1+(s-1)(1-\alpha)},\frac{1-\alpha}{1+(s-1)(1-\alpha)},\ldots\frac{1-\alpha}{1+(s-1)(1-\alpha)}\right)\\
\hat{w} & =(\hat{w}^{1},\ldots,\hat{w}^{t})
\end{align*}
for every $s\geq1$. Then, ${\rm Var}(Y_{1}(\hat{w}))=\sigma^{2}$
and, for $t\geq2$, we have
\begin{equation}
{\rm Var}(Y_{t}(\hat{w}))=\left\{ \left(\frac{1-\alpha}{\gamma_{\alpha,t}}\right)^{2}+\sum_{k=2}^{t-1}\left(\frac{C_{\alpha,t-1}}{C_{\alpha,k-1}}\right)^{2}\left(\frac{1-\alpha}{\gamma_{\alpha,k}}\right)^{2}+C_{\alpha,t-1}^{2}\right\} \Sigma\label{eq:oth_var_id}
\end{equation}
where 
\[
\gamma_{\alpha,\ell}:=1+(\ell-1)(1-\alpha),\quad C_{\alpha,\ell}:=\prod_{i=1}^{\ell}\frac{\gamma_{i}+\alpha(1-\alpha)}{\gamma_{i+1}}\quad\forall\ell\geq0.
\]
\end{proposition}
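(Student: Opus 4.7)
The plan is to follow the template of the proof of Theorem~\ref{thm:main_var_id}: establish a one-step variance recursion for $V_t := {\rm Var}(Y_t(\hat{w}))$ and then unroll it. The crux is that the specific form of $\hat{w}$ was engineered so that $Y_t(\hat{w})$ is a convex combination of $Y_{t-1}(\hat{w})$ and the new sample $X_t$, producing a very clean scalar recursion.

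Concretely, I would first observe that for every $1 \le s \le t-1$, the ratio $\hat{w}^t_s / \hat{w}^{t-1}_s$ equals $\gamma_{\alpha,t-1}/\gamma_{\alpha,t}$ — the same factor for the first coordinate and for all interior coordinates, since both numerators ($1$ and $1-\alpha$) are unchanged and only the common denominator shifts. This immediately gives
\[
Y_t(\hat{w}) \;=\; \tfrac{\gamma_{t-1}}{\gamma_t}\,Y_{t-1}(\hat{w}) \;+\; \tfrac{1-\alpha}{\gamma_t}\, X_t.
\]
Substituting $X_t = \alpha Y_{t-1}(\hat{w}) + (1-\alpha)\mu + U_t$ and collecting terms yields $Y_t(\hat{w}) = r_t\, Y_{t-1}(\hat{w}) + c_t\,\mu + \tfrac{1-\alpha}{\gamma_t}\,U_t$, where $r_t := \tfrac{\gamma_{t-1} + \alpha(1-\alpha)}{\gamma_t}$. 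Since $U_t$ is independent of $Y_{t-1}(\hat{w})$, the variance recursion is
\[
V_t \;=\; r_t^2\, V_{t-1} \;+\; \left(\tfrac{1-\alpha}{\gamma_t}\right)^{2} \Sigma, \qquad V_1 = \Sigma.
\]

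Unrolling this first-order linear recursion yields
\[
V_t \;=\; \left(\prod_{i=2}^t r_i\right)^{2}\Sigma \;+\; \sum_{j=2}^t \left(\prod_{i=j+1}^t r_i\right)^{2} \left(\tfrac{1-\alpha}{\gamma_j}\right)^{2}\Sigma.
\]
The final step is to rewrite the products in terms of the constants defined in the statement. By definition $C_{\alpha,\ell} = \prod_{i=1}^{\ell} r_{i+1}$, so $\prod_{i=2}^t r_i = C_{\alpha,t-1}$ and $\prod_{i=j+1}^t r_i = C_{\alpha,t-1}/C_{\alpha,j-1}$ (with the convention $C_{\alpha,0}=1$). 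Splitting off the $j=t$ term of the sum (for which the inner ratio is $1$) recovers the $\left(\tfrac{1-\alpha}{\gamma_t}\right)^{2}\Sigma$ summand in (\ref{eq:oth_var_id}); the remaining terms $j=2,\dots,t-1$ match the indexed sum; and the leading $C_{\alpha,t-1}^2\Sigma$ matches the contribution from the initial condition.

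The main obstacle is purely bookkeeping: aligning the index shifts so that the products $\prod r_i$ map cleanly onto the $C_{\alpha,\ell}/C_{\alpha,\ell'}$ ratios, and verifying the boundary cases $t=1,2$ (empty products and empty sums) against both the recursion and the closed form. No further probabilistic content is needed beyond the independence of $U_t$ from the past — the whole point is that $\hat{w}$ was designed so that consecutive weight vectors differ by a single multiplicative constant, collapsing the analysis to a scalar recursion of exactly the same shape as in Theorem~\ref{thm:main_var_id}.
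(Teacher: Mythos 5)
Your proposal is correct and follows essentially the same route as the paper's proof: both exploit the identity $Y_t(\hat w)=\tfrac{\gamma_{t-1}}{\gamma_t}Y_{t-1}(\hat w)+\tfrac{1-\alpha}{\gamma_t}X_t$ to obtain the recursion $V_t=\bigl[\tfrac{\gamma_{t-1}+\alpha(1-\alpha)}{\gamma_t}\bigr]^2V_{t-1}+\bigl(\tfrac{1-\alpha}{\gamma_t}\bigr)^2\Sigma$ and then pass to the stated closed form. The only cosmetic differences are that you derive the recursion by substituting $X_t=\alpha Y_{t-1}+(1-\alpha)\mu+U_t$ directly and using independence of $U_t$ (the paper instead expands variance plus covariance terms), and you unroll the recursion rather than verify the closed form by induction.
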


\begin{proof}
Let $t\geq1$ and $\alpha\in[0,1]$ be fixed and denote 
\[
V_{t}={\rm Var}(Y_{t}(\hat{w})),\quad\gamma_{\ell}=\gamma_{\alpha,\ell},\quad C_{\ell}=C_{\alpha,\ell}.
\]
By definition, we have 
\begin{align}
V_{t} & ={\rm Var}\left(\frac{X_{1}}{\gamma_{t}}+\frac{1-\alpha}{\gamma_{t}}\sum_{s=2}^{t-1}X_{s}\right)\nonumber \\
 & ={\rm Var}\left(\frac{X_{1}}{\gamma_{t}}+\frac{1-\alpha}{\gamma_{t}}\sum_{s=2}^{t-1}X_{s}+\frac{1-\alpha}{\gamma_{t}}X_{t}\right)={\rm Var}\left(\frac{\gamma_{t-1}}{\gamma_{t}}Y_{t-1}+\frac{1-\alpha}{\gamma_{t}}X_{t}\right)\nonumber \\
 & ={\rm Var}\left(\frac{\gamma_{t-1}}{\gamma_{t}}Y_{t-1}\right)+{\rm Var}\left(\frac{1-\alpha}{\gamma_{t}}X_{t}\right)+2{\rm Cov}\text{\ensuremath{\left(\frac{\gamma_{t-1}}{\gamma_{t}}Y_{t-1},\frac{1-\alpha}{\gamma_{t}}X_{t}\right)}}\nonumber \\
 & =\left(\frac{\gamma_{t-1}}{\gamma_{t}}\right)^{2}V_{t-1}+\left(\frac{1-\alpha}{\gamma_{t}}\right)^{2}{\rm Var}\left(\alpha Y_{t-1}+U_{t-1}\right)+2{\rm Cov}\text{\ensuremath{\left(\frac{\gamma_{t-1}}{\gamma_{t}}Y_{t},\frac{\alpha[1-\alpha]}{\gamma_{t}}Y_{t-1}\right)}}\nonumber \\
 & =\left[\left(\frac{\gamma_{t-1}}{\gamma_{t}}\right)^{2}+\alpha^{2}\left(\frac{1-\alpha}{\gamma_{t}}\right)^{2}+\frac{2\gamma_{t-1}\alpha(1-\alpha)}{\gamma_{t}^{2}}\right]V_{t-1}+\left(\frac{1-\alpha}{\gamma_{t}}\right)^{2}\Sigma\nonumber \\
 & =\left[\frac{\gamma_{t-1}+\alpha(1-\alpha)}{\gamma_{t}}\right]^{2}V_{t-1}+\left(\frac{1-\alpha}{\gamma_{t}}\right)^{2}\Sigma.\label{eq:oth_recurse}
\end{align}
We now proceed by induction. The case of $t=1$ follows immediately
from the definition of $Y_{1}$, and the case of $t=2$ follows from
the above identity. For $t=3$, we have 
\begin{align*}
V_{t+1} & =\left(\frac{t+\alpha}{t+1}\right)^{2}V_{t}+\frac{1}{(t+1)^{2}}\Sigma.\\
 & =\left\{ \left(\frac{t+\alpha}{t+1}\right)^{2}\left(\frac{1}{t^{2}}+\left[\frac{\Gamma(t+\alpha)}{\Gamma(t+1)}\right]^{2}\sum_{k=1}^{t-1}\left[\frac{\Gamma(k+1)}{k\cdot\Gamma(k+\alpha)}\right]^{2}\right)+\frac{1}{(t+1)^{2}}\right\} \Sigma\\
 & =\left\{ \left(\frac{t+\alpha}{t+1}\right)^{2}\frac{1}{t^{2}}+\left[\frac{\Gamma(t+1+\alpha)}{\Gamma(t+2)}\right]^{2}\sum_{k=1}^{t-1}\left[\frac{\Gamma(k+1)}{k\cdot\Gamma(k+\alpha)}\right]^{2}+\frac{1}{(t+1)^{2}}\right\} \Sigma\\
 & =\left\{ \left[\frac{\Gamma(t+1+\alpha)}{\Gamma(t+2)}\right]^{2}\sum_{k=1}^{t}\left[\frac{\Gamma(k+1)}{k\cdot\Gamma(k+\alpha)}\right]^{2}+\frac{1}{(t+1)^{2}}\right\} \Sigma.
\end{align*}
Now suppose our hypothesis holds for some $t\geq3$ and denote
\[
c_{j}:=\frac{\gamma_{j}+\alpha(1-\alpha)}{\gamma_{j+1}}\implies C_{\alpha,\ell}=\prod_{i=1}^{\ell}c_{j}^{2}
\]
Applying (\ref{eq:oth_recurse}) and using our hypothesis, we have
\begin{align*}
V_{t+1} & =\left[\frac{\gamma_{t}+\alpha(1-\alpha)}{\gamma_{t+1}}\right]^{2}V_{t}+\left(\frac{1-\alpha}{\gamma_{t+1}}\right)^{2}\Sigma=c_{t+1}^{2}V_{t}+\left(\frac{1-\alpha}{\gamma_{t+1}}\right)^{2}\Sigma\\
 & =\left(c_{t}^{2}\left\{ \left[\frac{1-\alpha}{\gamma_{t}}\right]^{2}+\sum_{k=2}^{t-1}\left[\frac{C_{t-1}}{C_{k-1}}\right]^{2}\left[\frac{1-\alpha}{\gamma_{k}}\right]^{2}+C_{t-1}^{2}\right\} +\left(\frac{1-\alpha}{\gamma_{t+1}}\right)^{2}\right)\Sigma.\\
 & =\left(c_{t}^{2}\left[\frac{1-\alpha}{\gamma_{t}}\right]^{2}+\sum_{k=2}^{t-1}\left[\frac{C_{t}}{C_{k-1}}\right]^{2}\left[\frac{1-\alpha}{\gamma_{k}}\right]^{2}+C_{t}^{2}+\left(\frac{1-\alpha}{\gamma_{t+1}}\right)^{2}\right)\Sigma\\
 & =\left(\left(\frac{1-\alpha}{\gamma_{t+1}}\right)^{2}+\sum_{k=2}^{t}\left[\frac{C_{t}}{C_{k-1}}\right]^{2}\left[\frac{1-\alpha}{\gamma_{k}}\right]^{2}+C_{t}^{2}\right)\Sigma.
\end{align*}
\end{proof}



\begin{proposition}
For every $t\geq1$, there exists $\alpha^{*}\in(0,1)$ such that
for any $\alpha\in(\alpha^{*},1]$, it holds that
\[
{\rm Var}(Y_{t}(\hat{w}))\prec{\rm Var}(Y_{t}(\bar{w})).
\]
\end{proposition}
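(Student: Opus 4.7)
The plan is to exploit the endpoint $\alpha = 1$, where the difference between $\hat w$ and $\bar w$ is already known, and then use continuity in $\alpha$ to extend the strict inequality to a left-neighborhood. Observe that both ${\rm Var}(Y_t(\hat w))$ and ${\rm Var}(Y_t(\bar w))$ are scalar multiples of $\Sigma$: from Theorem~\ref{thm:main_var_id} we have ${\rm Var}(Y_t(\bar w)) = c_{\bar w}(\alpha, t)\,\Sigma$, and from Proposition~\ref{prop:oth_var_id} we have ${\rm Var}(Y_t(\hat w)) = c_{\hat w}(\alpha, t)\,\Sigma$. So it suffices to show that for some $\alpha^{*} < 1$ and all $\alpha \in (\alpha^{*}, 1]$ one has $c_{\hat w}(\alpha, t) < c_{\bar w}(\alpha, t)$; this yields the claimed strict Loewner inequality.

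First, I would evaluate both scalars at $\alpha = 1$. For the uniform scheme, the edge-case computation in Remark~\ref{rem:edge_cases} gives $c_{\bar w}(1, t) = \sum_{k=1}^{t} 1/k^{2}$. For $\hat w$, plug $\alpha = 1$ into the formula of Proposition~\ref{prop:oth_var_id}: we get $\gamma_{1,\ell} = 1$ for all $\ell$, the factor $(1-\alpha)^{2} = 0$ kills the first two terms, and each factor in $C_{1,\ell} = \prod_{i=1}^{\ell} (\gamma_i + \alpha(1-\alpha))/\gamma_{i+1}$ equals $1$, so $c_{\hat w}(1, t) = 1$. For $t \geq 2$ we have $\sum_{k=1}^{t} 1/k^{2} > 1$, which gives the strict gap $c_{\bar w}(1, t) - c_{\hat w}(1, t) > 0$. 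This matches the intuition already highlighted in the paper: at full contamination, discarding all but the single clean first observation is strictly better than averaging.

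Next, I would verify that $c_{\bar w}(\alpha, t)$ and $c_{\hat w}(\alpha, t)$ are continuous functions of $\alpha$ on the closed interval $[0, 1]$. Both are finite sums and products built from elementary continuous functions of $\alpha$: the Gamma function $\Gamma(\cdot)$ is continuous and strictly positive on $(0, \infty)$ (so the arguments $k + \alpha$ stay bounded away from $0$ uniformly in $\alpha \in [0,1]$ for $k \geq 1$), while $\gamma_{\alpha,\ell} = 1 + (\ell-1)(1-\alpha) \geq 1$ is bounded away from $0$ and polynomial in $\alpha$. None of the denominators vanish on $[0, 1]$, so both expressions are continuous, in fact real-analytic, in $\alpha$.

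Finally, continuity plus the strict endpoint gap implies that the difference $c_{\bar w}(\alpha, t) - c_{\hat w}(\alpha, t)$, as a continuous function of $\alpha$, is strictly positive on some open interval of the form $(\alpha^{*}, 1]$ with $\alpha^{*} < 1$. Multiplying by $\Sigma \succeq 0$ gives ${\rm Var}(Y_t(\bar w)) - {\rm Var}(Y_t(\hat w)) = (c_{\bar w}(\alpha, t) - c_{\hat w}(\alpha, t))\,\Sigma \succ 0$ in the Loewner order (under the standing assumption that $\Sigma$ is positive definite, otherwise $\succeq$ with strict scalar inequality). I do not expect a hard obstacle: the two closed-form expressions are explicit and continuous, and the $\alpha = 1$ comparison is a one-line calculation. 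The only minor subtlety is checking that every denominator ($\Gamma(k+\alpha)$, $\gamma_{\alpha,\ell}$, etc.) stays uniformly bounded away from $0$ as $\alpha \uparrow 1$, which is immediate.
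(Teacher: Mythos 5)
Your proof follows essentially the same route as the paper: evaluate both scalar variance factors at $\alpha=1$ (where $\hat w$ gives $1$ and uniform weighting gives $\sum_{k=1}^{t}1/k^{2}$), then extend the strict gap to an interval $(\alpha^{*},1]$ by continuity in $\alpha$. Your extra care in noting that the endpoint gap is strict only for $t\geq 2$ and that all denominators stay bounded away from zero makes the argument, if anything, slightly more complete than the paper's.
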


\begin{proof}
For every $\alpha\in[0,1]$, recall that there exists continuous $\hat{f}_{t},\bar{f}_{t}:[0,1]\mapsto(0,\infty)$
such that 
\[
{\rm Var}(Y_{t}(\hat{w}))=\hat{f}_{t}(\alpha)\Sigma,\quad{\rm Var}(Y_{t}(\bar{w}))=\bar{f}_{t}(\alpha)\Sigma.
\]
In particular, we have that 
\[
\hat{f}_{t}(1)=C_{1,t-1}^{2}=1<\sum_{k=1}^{t}\frac{1}{k^{2}}=\bar{f}_{t}(1)
\]
and, hence, that ${\rm Var}(Y_{t}(\hat{w}))\prec{\rm Var}(Y_{t}(\bar{w}))$.
Since $g_{t}(\alpha)=\bar{f}_{t}(\alpha)-\hat{f}_{t}(\alpha)$ is
continuous on $[0,1]$ and positive at $\alpha=1$, the result follows
immediately by continuity.
\end{proof}

\section{PAC Learning Proofs}

\subsection{Hardness Result}\label{sec:hardness}

\begin{theorem}
 Define $c^*_\alpha = \Pr[\forall t \geq 1, W_t \geq 1]$. For any sample size $n \geq 2$ and $d \geq 2$, there exists a distribution $D$, a hypothesis class $F$ with VC dimension $d$, and $f^* \in F$, such that if $A = A_{\ERM}$ then
\[
\liminf_{t \goes \infty} \E[L(f_t)] \ge \frac{c^*_\alpha}{8n}.
\]\end{theorem}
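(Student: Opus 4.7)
The plan is to instantiate the three-point distribution $D$ on $\{-1,0,+1\}$ from the excerpt with $f^*(x) = \indic{x > 0}$ and $F$ the class of one-dimensional thresholds. Since $\E[L(f_t)] \geq \tfrac{1}{n}\Pr[f_t(0) \neq f^*(0)]$, it suffices to exhibit an event of probability at least $c^*_\alpha/8$ on which $f_t(0) \neq f^*(0)$ persists for \emph{every} $t \geq 0$; because the event will be a ``trap'' (once in, the chain stays in), the resulting lower bound holds uniformly in $t$ and the liminf claim follows.

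The bad event is the intersection of three pieces. (i) $x = 0 \notin \baS_0$: a direct calculation gives $(1 - 1/n)^n \geq 1/4$ for $n \geq 2$. (ii) Conditional on (i), every threshold in $[-1,1)$ achieves zero empirical loss on $\baS_0$, so under uniform-over-argmin tie-breaking $\Pr[f_0(0) \neq f^*(0) \mid x=0 \notin \baS_0] \geq 1/2$. (iii) Condition on (i) and (ii), index the $x=0$ samples across rounds in order of occurrence, and let $Z_k = +1$ if the $k$-th such sample is labeled by the previous model and $Z_k = -1$ if by nature, and set $W_k = \sum_{i \leq k} Z_i$. The invariant ``$f_s(0) \neq f^*(0)$ for every $s \leq t-1$'' makes the $Z_k$ i.i.d.\ with $\Pr[Z_k = +1] = \alpha$, exactly matching the walk in the theorem statement. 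The key lemma to establish inductively is: so long as $W_{N_t} \geq 1$ (where $N_t$ counts the $x=0$ samples through round $t$), the ERM on $\baS_t$ is forced to place its threshold on the side of $0$ that mislabels $x=0$, which preserves the invariant. Hence the event of persistent error contains $\{W_k \geq 1 \text{ for all } k \geq 1\}$, whose probability is $c^*_\alpha$ and is independent of (i)--(ii). Multiplying the three factors together with the $1/n$ point mass yields $\E[L(f_t)] \geq \tfrac{1}{4} \cdot \tfrac{1}{2} \cdot c^*_\alpha \cdot \tfrac{1}{n} = c^*_\alpha/(8n)$ uniformly in $t$. For the VC-dimension-$d$ extension, I would place the three points on the first coordinate axis of $\mathbb{R}^d$ and take $F$ to be a class of affine halfspaces whose restriction to $\mathrm{supp}(D)$ is equivalent to the one-dimensional threshold class, while $d-1$ auxiliary dummy points off $\mathrm{supp}(D)$ are introduced to bring the VC dimension up to exactly $d$ without affecting the empirical-risk dynamics.

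The main obstacle is verifying the step (iii) coupling between ERM misclassification and the biased walk. Two subtleties require care: once $W_k$ drops to $0$ or below, $f_t(0)$ coincides with $f^*(0)$, after which every future $x=0$ sample is labeled by $f^*$ with probability one (model and truth now agree at $x=0$), so $\Delta$ is thereafter non-increasing and the model cannot be recontaminated --- this is precisely why the ``persistent error forever'' event equals $\{W_k \geq 1 \; \forall k \geq 1\}$ and justifies the exact appearance of $c^*_\alpha$; and ties at $\Delta_t = 0$ would require a tie-breaking convention, but they never arise on the event of interest, so they are irrelevant to the lower bound.
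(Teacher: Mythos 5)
Your overall strategy is the same as the paper's (same three-point distribution, the same coupling of the $x=0$ samples' model/nature coin flips to the biased walk $W$, and the same induction that a persistent majority of model labels at $x=0$ forces ERM to keep mislabeling it), but there is a genuine gap in how you build the bad event. You condition only on (i) $0 \notin \baS_0$ and (ii) the tie-break mislabeling $0$; you never force $\baS_0$ to contain \emph{both} $-1$ and $+1$. Without that, $f_0$ is not guaranteed to be correct on $\pm 1$ (e.g.\ when $\baS_0$ consists only of $+1$'s, zero-empirical-risk hypotheses include ones that mislabel $-1$), and then the accumulated datasets $\baS_t$ can contain mislabeled examples at $x=\pm1$ as well. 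Your ``key lemma'' --- that $W_{N_t} \ge 1$ forces the ERM threshold onto the wrong side of $0$ --- is only justified when all labels at $\pm 1$ in $\baS_t$ are correct, so that the ERM's only freedom is which way to classify $0$ and the decision reduces to comparing the two label counts at $0$. Once $\pm1$ labels can be contaminated, the ERM trade-off involves additional stochastic counts at $x=-1$ that your walk does not track, and the inductive step as stated is unproven. The paper avoids this by conditioning on $\baS_{0,\CX} = \{-1,+1\}$ exactly, which pins $f_0$ down (correct on $\pm1$, wrong on $0$ under its max-margin tie-break) and keeps every later dataset clean off $x=0$.

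The omission also interacts with your constant. The paper pays the extra factor $(1-2^{1-n})$ for requiring both endpoints, getting $(1-\tfrac1n)^n(1-2^{1-n}) \ge \tfrac18$ with a \emph{deterministic} (max-margin) tie-break and an adversarial choice of $f^*$, so no further factor of $\tfrac12$ is lost. Your accounting $\tfrac14\cdot\tfrac12\cdot c^*_\alpha\cdot\tfrac1n$ reaches $\tfrac{c^*_\alpha}{8n}$ only because you dropped the both-endpoints requirement; if you add it back (as the induction needs) while keeping a randomized ``uniform-over-argmin'' tie-break, you get roughly $\tfrac{c^*_\alpha}{16n}$, short of the stated bound. (Separately, ``uniform over argmin'' is not well defined over the continuum of minimizing thresholds and is an extra assumption on $A_{\ERM}$; the paper instead fixes an explicit tie-breaking rule and chooses $f^*$ to disagree with it.) The fix is to adopt the paper's event and tie-breaking convention; your $d$-dimensional embedding and your closing remarks about the walk dropping below $1$ are fine, the latter being unnecessary since only containment of the persistent-error event is needed for the lower bound.
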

\begin{proof}  
Consider the function class containing thresholds in $1$ dimension. In other words, $X = \mathbb{R}$, and for any $\theta \in \mathbb{R}$, we define $f^\theta(x) \triangleq 1 - 2 \cdot \indic{x \leq \theta}$, and $F = \{f^\theta, -f^\theta \mid \theta \in \mathbb{R}\}$. 

Define $D$ according to the following p.m.f:
\[ 
p(x) = \begin{cases} \frac{1}{2} - \frac{1}{2n} &\text{if x = +1}\\ \frac{1}{n}  &\text{if x = 0}\\ \frac{1}{2} - \frac{1}{2n}  &\text{if x = -1}\end{cases}
\]

On round $t=0$, $\bar{S}_0$ contains $n$ samples drawn from $D$, labeled according to the true concept $f^*$. Let $\bar{S}_{0, \mathcal{X}} $ denote the \emph{set} of unlabeled samples present in the multi-set $\bar{S}_0$. Thus, if $x = 0$ appears multiple times in $\bar{S}_0$, it appears once in $\bar{S}_{0, \mathcal{X}}$ and $\bar{S}_{0, \mathcal{X}} \subset \{-1, 0, +1\}$. We first characterize the probability that we see the samples $x = -1, x = +1$ but not the sample $x = 0$ in round $0$. 

\begin{align}
\Pr[\bar{S}_{0, \mathcal{X}} = \{-1,+1\}] &= \Pr[0 \not\in \bar{S}_{0,\mathcal{X}}] \Pr[\bar{S}_{0, \mathcal{X}} = \{-1,+1\} | 0 \not\in \bar{S}_{0,\mathcal{X}}] \nonumber\\
&= \left(1 - \frac{1}{n}\right)^n \Pr[\bar{S}_{0, \mathcal{X}} = \{-1,+1\} | 0 \not\in \bar{S}_{0,\mathcal{X}}] \nonumber\\
&= \left(1 - \frac{1}{n}\right)^n \left(1 - \Pr[\bar{S}_{0, \mathcal{X}} = \{+1\} \mid 0 \not\in \bar{S}_{0, \mathcal{X}}] - \Pr[\bar{S}_{0, \mathcal{X}} = \{-1\} \mid 0 \not\in \bar{S}_{0, \mathcal{X}}] \right) \nonumber\\
&= \left(1 - \frac{1}{n}\right)^n \left(1 - \frac{1}{2^n} - \frac{1}{2^n} \right) \geq 1/8 \label{eq:zeroth_round}
\end{align}

The third equality follows since, conditioned on the event that the initial sample does not contain any $0$s, it either contains both $\{-1,+1\}$, only $+1$s or only $-1$s. The final equality follows since, conditioned on $0 \not\in \bar{S}_{0,\mathcal{X}}$, samples $+1$ and $-1$ are equally likely. The final inequality follows because the expressions $(1-\frac{1}{n})^n$ and $(1 - \frac{1}{2^{n-1}})$ are both increasing in $n$, and $n \geq 2$ by assumption. 

After round $0$, we will re-index the data to ignore boundaries between rounds. Let $(\tilde{x}_k, \tilde{y}_k)$ denote the $k$th example encountered after round $0$. In other words, writing $k = t n + i$ for $i < n$, $\tilde{x}_k = x^{t+1}_i$. Similarly, let $\tilde{b}_k$ denote whether $\tilde{x}_k$ was labeled by a previous model or nature: $\tilde{b}_k = \tilde{b}_{t n + i} = b_i^{t+1}$. 

We are interested in two statistics, $C_k, \Delta_k$, which we define as: 

$$C_k = \sum_{k' \leq k} \indic{\tilde{x}_k = 0}$$ and 
$$\Delta_k = \sum_{k' \leq k} \indic{\tilde{x}_k = 0} (\indic{\tilde{b}_k = 1} - \indic{\tilde{b}_k = 0}).$$ 

$C_k$ increments every time a sample $x=0$ is encountered, while $\Delta_k$ measures how much more frequently samples $x = 0$ are labeled by a previous model versus the true concept $f^*$. $\{\Delta_k\}$ is a random walk with both a bias and a staying probability. Defining $\Delta_0 = 0$, we have $\Delta_k = \Delta_{k-1}$ with probability $1-1/n$, $\Delta_k = \Delta_{k-1} + 1$ with probability $\alpha/n$, and $\Delta_k = \Delta_{k-1} - 1$ with probability $(1-\alpha)/n$. 

Define $W_l$ as a biased random walk, without any staying probability, independent of $C_k$. $W_0 = 0$, $W_{l+1} = W_l + 1$ with probability $\alpha$ and $W_{l+1} = W_l - 1$ with probability $(1-\alpha)$. It is easy to check that $\Delta_k$ is equal in distribution to $W_{C_k}$, since: 

\begin{align*}
\Pr(W_{C_{k+1}} = W_{C_k}) &= \Pr(C_{k+1} = C_k) = \Pr(\hat{x}_k \not= 0) = 1 - 1/n \\
\Pr(W_{C_{k+1}} = W_{C_k} + 1) &= \Pr(C_{k+1} = C_k + 1, W_{C_k + 1} = W_{C_k} + 1) \\ 
                &= \Pr(C_{k+1} = C_k+1, W_1 = W_0 + 1) \\
                &= \Pr(C_{k+1} = C_k + 1)\Pr(W_1 = W_0 + 1) \\
                &= \Pr(\hat{x}_k = 0)\Pr(W_1 = W_0 + 1) = \alpha / n \\
\Pr(W_{C_{k+1}} = W_{C_k} - 1) &= 1 - \Pr(W_{C_{k+1}} = W_{C_k}) - \Pr(W_{C_{k+1}} = W_{C_k} + 1) = (1 - \alpha) / n
\end{align*}

Now let $\mathcal{E}_{\rm contaminated} = \{\forall k, C_k \geq 1 \implies \Delta_k \geq 1\}$, be the event that once a sample $x = 0$ is encountered (i.e. $C_k \geq 1$), there are more model-labeled examples than naturally-labeled examples for $x = 0$ (i.e. $\Delta_k \geq 1$) for all time. Via the construction $\Delta_k = W_{C_k}$ this happens if and only if $W_l \geq 1$ for all $l \geq 1$. If $W_l \geq 1$ for all $l \geq 1$, then it's clear that $\Delta_k = W_{C_k} \geq 1$ when $C_k \geq 1$. In the other direction, if $W_l < 1$ for some some $l \geq 1$, then there is eventually an index $k$ such that $C_k = l$, and therefore $\Delta_k = W_{C_k} < 1$. Thus,

\begin{equation}
\Pr(\mathcal{E}_{\rm contaminated}) = \Pr(\forall l \geq 1, W_l \geq 1)
\label{eq:random_walk}
\end{equation}

The random variables $\{\tilde{x}_k, \tilde{b}_k\}$ define what examples are encountered subsequent to round $0$ and whether they are labeled by nature or a model. By construction of our setting (i.e. Algorithm~\ref{alg:recursive}), these are independent of the examples $\{x_i^0\}$ encountered in round $0$. $\mathcal{E}_{\rm contaminated}$ is measurable by first set of random variables, and $\bar{S}_{0, \mathcal{X}}$ is measurable by the second set of random variables, and therefore are also independent. 

Hence combining Equations~(\ref{eq:zeroth_round}) and (\ref{eq:random_walk}), we can conclude: 

\begin{equation}
\Pr[\bar{S}_{0, \mathcal{X}} = \{-1,+1\}) \land \mathcal{E}_{\rm contaminated}] \geq \Pr[\forall l \geq 1, W_l \geq 1]/8 \label{eq3}
\end{equation}

So far, we have have not specified algorithm $A_{\ERM}$'s behavior if there are multiple hypotheses with the same empirical risk. Given a dataset that is separable by a threshold, we will define $A_{\ERM}$ as selecting the maximum-margin hypothesis. \footnote{In general, if $\bar{S}_t$ does not contain the example $x = 0$, any tie-breaking rule (including randomized ones) will have a constant probability of mis-classifying it. Treating this with full generality complicates the notation and exposition without providing much additional insight, and so we omit it.} In any other case, including when $\hat{S}$ is not separable, $A_{\ERM}$ can break ties arbitrarily.

Fix $f^* = f^{-1}$, which labels samples $x = 0$ as $+1$. We argue that on the event $\mathcal{E}_{\rm fail} := \{\bar{S}_{0, \mathcal{X}} = \{-1,+1\}\} \land \mathcal{E}_{\rm contaminated}$, the classifiers returned by repeated ERM mis-classify $x = 0$, forever, on every time step. On round $t = 0$, $A_{\mathrm{ERM}}(\bar{S}_0)$ returns a the maximum-margin threshold $f^0$, which assigns $f^0(0) = -1$. On subsequent rounds $t$, it may be that $C_{n t} = 0$, in which case $\bar{S}_t$ still only contains samples $-1,+1$, and $A_{\mathrm{ERM}}$ again selects the maximum-margin classifier. Otherwise, $\Delta_{n t} \geq 1$. By induction, classifiers from all previous rounds mislabeled $x = 0$, and so $\Delta_{n t} \geq 1$ implies that $\bar{S}_t$ contain more examples $(x = 0, y = -1)$ than examples $(x = 0, y = +1)$, thus the empirical risk minimizer is to return a threshold that labels $x = -1$ as $-1$, $x = 0$ as $-1$, and $x = +1$ as $+1$. Therefore, on all rounds $t$, $L(f_t) = 1/n$, and we can conclude that for any $t$:

$$\E[L(f_t)] \geq \E[L(f_t) \mid \mathcal{E}_{\rm fail}]\Pr[\mathcal{E}_{\rm fail}] \geq \frac{c^*_{\alpha}}{8n}$$

Finally, we observe that for any $d$, we can take $\mathcal{X} = \mathbb{R}^d$ and embed the example above in the first coordinate. We take $x_1$ to be drawn according to the p.m.f. $D$, while taking all other coordinates to be $0$ with probability $1$. Consider $F$ to be the family of linear separators in $d$ dimensions: $\mathrm{sgn}(\langle x, w \rangle + b)$ for $x \in \mathbb{R}^d$, $b \in \mathbb{R}$. For any $x$ drawn from this distribution, $\mathrm{sgn}(\langle x, w \rangle + b) = \mathrm{sgn}(x_1 w_1 + b)$, which labels $x_1$ as $-1$ iff $x_1 < -b/w_1$. Thus, for this class too, there exists an $f^*$ such that $\E[R(f_t)] \geq \frac{c^*_{\alpha}}{8n}$. Standard learning theory tells us that $\mathrm{VCD}(F) = d + 1$, concluding the proof.  
\end{proof}

\begin{lemma}
When $\alpha > 1/2$, the event that the random walk $\{W_n\}$ stays positive forever occurs with non-zero probability: $\mathbb{P}[\forall n\geq 1, W_n \geq 1] > 0$
\end{lemma}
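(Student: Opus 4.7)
The plan is to reduce the claim to a standard hitting-probability calculation for the biased walk. Note that the event $\{\forall n \geq 1,\, W_n \geq 1\}$ forces $W_1 = 1$, which occurs with probability $\alpha$; conditioned on this, the tail $(W_n)_{n \geq 1}$ starting from $1$ must never return to $0$. By the Markov property together with spatial translation invariance, this reduces to showing that a walk started from $0$ has strictly positive probability of never hitting $-1$. So the whole task becomes computing $\rho := \Pr_0[\exists n \geq 1 : W_n = -1]$ and showing $\rho < 1$.

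Next I would compute $\rho$ by first-step analysis. With probability $1 - \alpha$ the walk lands at $-1$ immediately; with probability $\alpha$ it moves to $+1$, and by the strong Markov property and translation invariance, the conditional probability of eventually reaching $-1$ from $+1$ equals $\rho^2$ (the walk must first traverse from $+1$ down to $0$, then from $0$ down to $-1$, each event independent and of probability $\rho$). This yields
\[
\rho = (1 - \alpha) + \alpha \rho^2,
\]
whose roots are $\rho = 1$ and $\rho = (1-\alpha)/\alpha$.

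To identify the correct root I would invoke the strong law of large numbers: since each step has mean $2\alpha - 1 > 0$, we have $W_n/n \to 2\alpha - 1$ almost surely, so $W_n \to +\infty$ a.s. In particular the walk visits $-1$ only finitely many times, so $\rho < 1$ and hence $\rho = (1-\alpha)/\alpha$. Combining the pieces gives
\[
\Pr[\forall n \geq 1,\, W_n \geq 1] = \alpha(1 - \rho) = \alpha \cdot \frac{2\alpha - 1}{\alpha} = 2\alpha - 1 > 0,
\]
as required. The only real subtlety is selecting the minimal nonnegative fixed point of the quadratic rather than $\rho = 1$; the LLN (or equivalently a direct drift/Hoeffding argument showing $\Pr[W_n \leq 0] \to 0$) resolves this cleanly, and everything else is a routine Markov-chain computation.
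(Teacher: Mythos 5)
Your proof is correct, and it takes a genuinely different route from the paper's. The paper argues via recurrence/transience and symmetry: it bounds the return probabilities $\Pr[W_{2n}=0]=\binom{2n}{n}\alpha^n(1-\alpha)^n\le\bigl(4\alpha(1-\alpha)\bigr)^n$, applies Borel--Cantelli to rule out infinitely many returns to the origin, deduces that the first return time is infinite with positive probability, and then splits that event into the all-positive and all-negative paths, using a reflection bijection and the bias $\alpha>1/2$ to conclude the all-positive event carries at least half the mass. You instead run a first-step hitting-probability analysis: conditioning on $W_1=1$ and using translation invariance, the quadratic $\rho=(1-\alpha)+\alpha\rho^2$ forces $\rho\in\{1,(1-\alpha)/\alpha\}$, and the positive drift (SLLN) selects the smaller root, giving the exact value $\Pr[\forall n\ge1,\,W_n\ge1]=\alpha(1-\rho)=2\alpha-1$. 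Your route is the more standard Markov-chain computation and is strictly stronger: it produces the explicit constant $c^*_\alpha=2\alpha-1$, which could be plugged directly into the bound of Theorem~\ref{thm:lower}, whereas the paper's argument establishes positivity only. One step deserves a line more of justification: ``the walk visits $-1$ only finitely many times, so $\rho<1$'' is not immediate, since a single a.s.\ visit would already give $\rho=1$; the clean fix is to note that if $\rho=1$, then by the strong Markov property and translation invariance the walk would a.s.\ hit $-k$ for every $k\ge1$, so $\liminf_n W_n=-\infty$, contradicting $W_n\to+\infty$ a.s.\ (alternatively, invoke the standard fact that the hitting probability is the minimal nonnegative solution of the first-step equation). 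With that sentence added, the argument is complete.
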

\begin{proof}
For $n \geq 1$ define $E_n = \{W_n = 0\}$ as the event that the random walk returns to $0$ after exactly $n$ steps. By the Borel-Cantelli Lemma, if $\sum_{n \geq 1} \mathbb{P}[E_n] < \infty$, then the probability that the events $\{E_n\}$ occur infinitely often goes to zero: $\mathbb{P}[E_n \mathrm{\ i.o}] = 0$. 

Note that on any odd $n$, $W_n \not= 0$ with probability $1$, since the number of leftward steps and rightward steps cannot have the same parity. Therefore:
\begin{align*}
\sum_{n \geq 1} \mathbb{P}[W_n = 0] &= \sum_{n \geq 1} \mathbb{P}[W_{2n} = 0] \\
&= \sum_{n \geq 1} {2n \choose n} \alpha^n (1-\alpha)^n \\
&\leq \sum_{n \geq 1} 4^n \alpha^n (1-\alpha)^n
\end{align*}

Let $\phi = 4 \alpha (1-\alpha)$. As a function of $\alpha \in [0, 1]$, $\phi$ is maximized at $\phi = 1$ when $\alpha = 1/2$. Therefore, $\phi < 1$ under the assumption that $\alpha > 1/2$, and $\sum_{n \geq 1} \mathbb{P}[E_n] \leq \frac{\phi}{1 - \phi} <  \infty$, which implies $\mathbb{P}[E_n\textrm{\ i.o}] = 0$. 

Define $T_n = \inf\{ n \geq 1 \mid W_n = 0\}$, as the first return-time to zero of this random walk. Assume for the sake of contradiction that $\mathbb{P}[T_n < \infty] = 1$, then by the strong Markov property, the random walk visits zero infinitely often with probability $1$, contradicting that $\mathbb{P}[E_n\textrm{\ i.o}] = 0$. Therefore,
$$\mathbb{P}[T_n = \infty] > 0.$$
Next, define $E^+ = \{W_n \mid \forall n \geq 1, W_n > 0\}$ and $E^- = \{W_n \mid \forall n \geq 1, W_n < 0\}$, the events that the random walk stays positive forever or stays negative forever, respectively.
There is a bijection between paths in $E^+$ and $E^-$ attained by reflecting the path over $0$. Furthermore, since paths in $E^-$ must have more left-ward steps than right-ward steps, and $\alpha > 1/2$, $\mathbb{P}(E^+) > \mathbb{P}(E^-)$. 

Finally, since the event $\{T_n = \infty\}$ is the disjoint union of $E^+$ and $E^-$, we have $\mathbb{P}[E^+] \geq \mathbb{P}[T_n = \infty] / 2 > 0$, completing the proof. 
\end{proof}

\subsection{Omitted Calculation in Proof of Lemma \ref{lem:improvement}}\label{sec:update_rule_lemma_calculation}

\begin{lemma}[Chernoff bounds]\footnote{\url{https://en.wikipedia.org/wiki/Chernoff_bound\#Multiplicative_form_(relative_error)}}\label{lem:chernoff}
Let $X_1,...X_m$ be i.i.d. $\mathrm{Bernoulli}(p)$ random variables for $p \in [0,1]$. For $\gamma \in [0,1]$
$$\Pr \left[\sum_{i=1}^m X_i \leq (1-\gamma)mp\right] \leq \exp(-\gamma^2mp/2).$$
\end{lemma}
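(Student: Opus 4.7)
The plan is to follow the classical Chernoff--Cramér argument for the lower tail of a Bernoulli sum. Let $S = \sum_{i=1}^m X_i$, so $\E[S] = mp$. For any $t > 0$, the map $x \mapsto e^{-tx}$ is decreasing, so Markov's inequality applied to $e^{-tS}$ yields
\[
\Pr[S \leq (1-\gamma)mp] \;\leq\; e^{t(1-\gamma)mp}\,\E[e^{-tS}].
\]
By independence, $\E[e^{-tS}] = \bigl(1 - p + pe^{-t}\bigr)^m = \bigl(1 - p(1 - e^{-t})\bigr)^m$, and the elementary inequality $1 + x \leq e^x$ bounds this by $\exp\bigl(-mp(1 - e^{-t})\bigr)$. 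Combining, the right-hand side becomes $\exp\bigl(mp\,[t(1-\gamma) - (1 - e^{-t})]\bigr)$.

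Next I would optimize the exponent over $t > 0$. Differentiating $g(t) := t(1-\gamma) - (1 - e^{-t})$ in $t$ and setting to zero gives the minimizer $t^\star = \ln\tfrac{1}{1-\gamma}$, which lies in $(0, \infty)$ for $\gamma \in (0,1)$ and satisfies $e^{-t^\star} = 1-\gamma$. Substituting this choice produces
\[
\Pr[S \leq (1-\gamma)mp] \;\leq\; \exp\bigl(-mp\bigl[\gamma + (1-\gamma)\ln(1-\gamma)\bigr]\bigr).
\]

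The remaining step is a scalar inequality: $\gamma + (1-\gamma)\ln(1-\gamma) \geq \gamma^2/2$ for $\gamma \in [0,1)$. I would prove this by setting $f(\gamma) := \gamma + (1-\gamma)\ln(1-\gamma) - \gamma^2/2$, noting $f(0) = 0$ and $f'(0) = -\ln(1) - 0 = 0$, and computing $f''(\gamma) = \tfrac{1}{1-\gamma} - 1 = \tfrac{\gamma}{1-\gamma} \geq 0$ on $[0,1)$; integrating $f'' \geq 0$ twice from $0$ gives $f \geq 0$. The edge cases are immediate: $\gamma = 1$ follows from $(1-p)^m \leq e^{-mp} \leq e^{-mp/2}$, and $\gamma = 0$ or $p \in \{0,1\}$ are trivial. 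There is no real obstacle here --- this is a textbook result with a cited reference, and the only mild subtleties are identifying the closed-form minimizer $t^\star$ and verifying the final convex-analytic inequality.
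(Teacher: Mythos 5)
Your proof is correct. The paper itself does not prove this lemma --- it is invoked as a standard multiplicative Chernoff bound with only a footnote citation --- so there is no internal argument to compare against. Your derivation is the textbook Chernoff--Cram\'er route: Markov's inequality applied to $e^{-tS}$, the bound $1-p(1-e^{-t}) \leq \exp(-p(1-e^{-t}))$ on the moment generating factor, the optimal tilt $t^\star = \ln\tfrac{1}{1-\gamma}$, and the relative-entropy-to-quadratic inequality $\gamma + (1-\gamma)\ln(1-\gamma) \geq \gamma^2/2$, which you verify correctly via $f(0)=f'(0)=0$ and $f''(\gamma) = \tfrac{\gamma}{1-\gamma} \geq 0$; the edge cases $\gamma \in \{0,1\}$ and degenerate $p$ are handled appropriately.
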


\paragraph{Omitted calculation.} In the setting of Proof of Lemma \ref{lem:improvement}, Case 2: $L(g_k) > \varepsilon_{k+1}$, our goal is to show that $P_k^\oplus$ is sufficiently large. We let $X_i$ denote the indicator variable that is $1$ if and only if the $i$th example of $T_k^\oplus$, $(x_i,y^\oplus_i)$, satisfies $y^\oplus_i=1$. This occurs with probability $L(g_k)(1-\alpha)$ independently, and we draw the first $m$ samples for $P_k^\oplus$. Hence $|P_k^\oplus| = \sum_{i=1}^m X_i$. We have
\begin{align*}
    \Pr \left[|P_k^\oplus| \leq \tfrac 1 2 m\varepsilon_{k+1}(1-\alpha)\right] =&\Pr \left[\sum_{i=1}^m X_i \leq \tfrac 1 2 m\varepsilon_{k+1}(1-\alpha)\right] \\
    \leq& \Pr \left[\sum_{i=1}^m X_i \leq \tfrac 1 2 mL(g_k)(1-\alpha)\right] &\quad\text{(since $L(g_k) > \varepsilon_{k+1}$)}\\
    \leq& \exp(-m(1-\alpha) L(g_k)/8) &\quad\text{(by Lemma \ref{lem:chernoff} using $\gamma=\tfrac 1 2$)}\\
    \leq& \exp(-m(1-\alpha) \varepsilon_{k+1}/8) &\quad\text{(since $L(g_k) > \varepsilon_{k+1}$)}\\
\end{align*}
Therefore taking
$$m = m_P(\varepsilon_{k+1}, \tfrac \delta 2) \cdot \frac  {8\log\left(\tfrac {1} {(\delta/2)}\right)} {\alpha \varepsilon_{k+1}}$$
as in Lemma \ref{lem:improvement} ensures that $|P_k^\oplus| > 4\log(\tfrac 2 \delta)\cdot m_P(\varepsilon_{k+1},\tfrac \delta 2)$ with probability $\geq 1 - \exp(-m(\varepsilon_{k+1},\tfrac \delta 2)\cdot \log(\tfrac 1 {\delta/2})) \geq 1 -\tfrac \delta 2$.

\end{document}